\documentclass{article}
\usepackage{microtype}
\usepackage{graphicx}
\usepackage{subcaption}
\usepackage{booktabs} %
\usepackage{hyperref}
\usepackage{enumitem}
\usepackage{amsmath}
\usepackage{amssymb}
\usepackage{mathtools}
\usepackage{amsthm}
\usepackage{bbm, dsfont}
\usepackage{multirow}
\usepackage{colortbl}
\usepackage{algorithm}
\usepackage{algorithmic}
\usepackage[capitalize]{cleveref}
\usepackage{aliascnt}
\PassOptionsToPackage{numbers, compress}{natbib}

\usepackage[preprint]{neurips_2026}

\newcommand{\Edges}{E}

\newcommand{\Espace}{\Omega}

\newcommand{\indicator}[1]{\mathbbm{1}_{\left\{#1\right\}}}

\newcommand{\zols}{\bar{z}}
\newcommand{\Bols}{\bar{B}}

\newcommand{\thetaols}{\bar{\theta}}

\newcommand{\zhat}{\hat z}
\newcommand{\pihat}{\hat{\pi}}
\newcommand{\Bhat}{\operatorname{B}\left(\zhat\right)}
\newcommand{\nuhat}{\nu\left(\zhat\right)}
\newcommand{\thetahat}{\hat{\theta}}

\newcommand{\ztrue}{z^*}
\newcommand{\Btrue}{B^*}
\newcommand{\nutrue}{\nu^*}
\newcommand{\thetatrue}{\theta^*}

\DeclareMathOperator*{\argmax}{arg\,max}
\DeclareMathOperator*{\argmin}{arg\,min}

\newcommand{\cT}{\mathcal{T}}

\newcommand{\cL}{\mathcal{L}}

\newcommand{\cJ}{\mathcal{J}}

\newcommand{\cZ}{\mathcal{Z}}

\newcommand{\bproj}[1]{\operatorname{B}\left(#1\right)}

\newcommand{\numgraphons}{6 }
\newcommand{\numsbms}{4 }
\newcommand{\totalsynthexp}{10 }

\newcommand{\sharedthm}[3]{%
  \newaliascnt{#1}{theorem}%
  \newtheorem{#1}[#1]{#2}%
  \aliascntresetthe{#1}%
  \crefname{#1}{#2}{#3}%
  \Crefname{#1}{#2}{#3}%
}

\theoremstyle{plain}
\newtheorem{theorem}{Theorem}[section]
\sharedthm{proposition}{Proposition}{Propositions}
\sharedthm{lemma}{Lemma}{Lemmas}
\sharedthm{corollary}{Corollary}{Corollaries}
\theoremstyle{definition}
\sharedthm{definition}{Definition}{Definitions}
\sharedthm{assumption}{Assumption}{Assumptions}
\theoremstyle{remark}
\sharedthm{remark}{Remark}{Remarks}

\usepackage[textsize=tiny]{todonotes}

\title{Network Learning with Semi-relaxed Gromov-Wasserstein
}

\author{%
  Charles Dufour \And
  Ulysse Naepels\\  EPFL, Institute of Mathematics \\ Lausanne, Switzerland \And
  Leonardo V.~Santoro
}

\begin{document}

\maketitle

\begin{abstract}
Estimating the generative mechanism of large-scale networks is a fundamental challenge in statistical machine learning. 
It requires the identification of the latent connectivity structure, which is in general an NP-hard combinatorial problem due to the absence of canonical node labels.
We address this challenge by allowing for probabilistic couplings, thereby relaxing the assignment problem. Our estimation framework can be formulated as a semi-relaxed Gromov–Wasserstein objective and provides a low-dimensional representation of the generative structure. We solve this via a block-coordinate conditional gradient algorithm. 
Despite the relaxation, the resulting solution is typically deterministic: in fact, we show that the optimality gap between the relaxed solution and the deterministic assignment vanishes at rate $O(1/n)$, where $n$ is the number of nodes.
This allows for tractable recovery of the underlying model and enables rigorous statistical analysis: we establish consistency and minimax-optimal convergence rates for both stochastic block models and Hölder-smooth graphons. Our implementation scales efficiently with $n$, as demonstrated on both synthetic and real-world datasets.
\end{abstract}

\section{Introduction}

A central challenge in the analysis of network data is the recovery of latent structure. In statistics and machine learning, this includes tasks such as identifying communities, summarizing connectivity patterns, and inferring underlying generative mechanisms from large networks of structured data.
Such problems are relevant across a wide range of applications, including the analysis of social networks \cite{traud2012social}, biological interaction networks \cite{zitnik2018modeling,agrawal2018large} such as protein-protein interactions \cite{hamilton2017inductive,gao2023hierarchical}, web recommender systems \cite{ying2018graph}, 
or infrastructure and communication systems \cite{li2018diffusion}.
In these settings, networks are often large, noisy, and unlabeled, which makes inference particularly challenging.

A fundamental difficulty common to network inference problems is the absence of a canonical node labeling. This permutation invariance introduces both conceptual and technical challenges: since model parameters are inherently identifiable only up to relabeling of the nodes, the problem is  linked to a challenging latent variable inference problem (i.e. a non-parametric regression with unknown design points \cite{gao_rateoptimal_2015a}). Any meaningful notion of estimation error must therefore compare networks \emph{up to permutations}, rather than at the level of individual node indices. This renders standard loss functions for adjacency matrices -- such as Frobenius or $\ell^2$ norms -- ill-suited to this setting. As a result, defining a suitable notion of distance and, in particular, what it means for an estimator to be close to the truth, becomes nontrivial.
These issues already arise in simple parametric models, such as stochastic block models (SBMs, \cite{holland_stochastic_1983a}) -- where nodes are assigned to latent blocks that govern their connectivity patterns --  and become even more pronounced in nonparametric settings, such as graphons \cite{lovasz_large_2012a}.
From a computational perspective, inference in these models typically requires estimating both the latent structural parameters and the assignment of nodes to communities, a high-dimensional combinatorial problem. Determining the assignment map is itself NP-hard, rendering standard estimation approaches computationally intractable.

A natural and popular approach to the lack of canonical ordering in network data is offered by optimal transport, and in particular by the Gromov–Wasserstein (GW) metric \cite{memoli_gromov_2011a}. The GW distance compares graphs by optimally aligning their pairwise relations, treating networks as metric measure spaces and assessing similarity by matching their internal geometry rather than their node identities. This framework has inspired a rich line of recent work, with applications to graph comparison, alignment, clustering, and barycenter computation \cite{peyre_gromovwasserstein_2016a, xu_gromovwasserstein_2019, vayer_fused_2020,bauer_zgromovwasserstein_2025a}.
While GW distances provide a natural, permutation-invariant notion of similarity between networks, their role in recovering generative parameters has only been recently algorithmically explored \cite{xu_learning_2021b,han_gmixup_2022a} but not theoretically clarified.  In particular, it remains unclear whether optimizing GW-based objectives can be interpreted as a statistically meaningful estimation procedure for latent network models or how the resulting transport plans relate to underlying node assignments and block structure. 
This gap motivates the present work, in which we show that relaxing hard node assignments to probabilistic couplings leads to a principled estimation framework that can be formulated as a semi-relaxed Gromov–Wasserstein barycenter problem, and that admits sharp statistical guarantees. \Cref{fig:flow} sketches the overall pipeline: from an observed adjacency matrix, we recover a low-dimensional block model that summarises the latent generative mechanism.

\paragraph{Contributions}

\begin{figure}[t]
    \centering
    \includegraphics[width=0.95\linewidth]{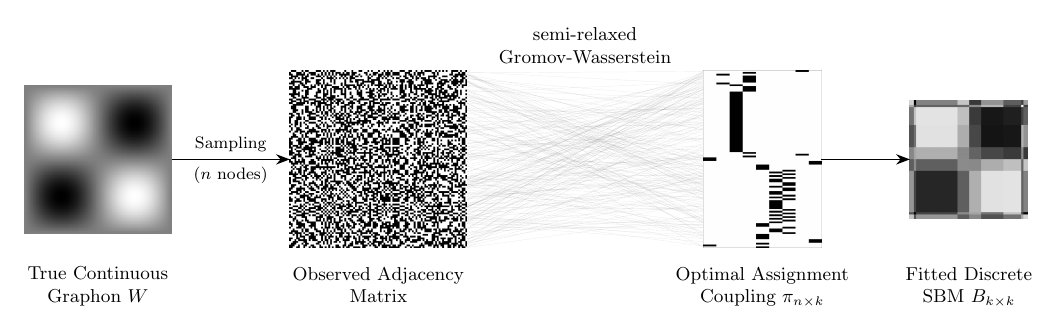}
    \caption{
    A finite adjacency matrix is sampled from a continuous latent graphon $W$. By minimizing the semi-relaxed Gromov-Wasserstein distance, we identify an optimal assignment coupling $\pi_{n \times k}$—observed here to be deterministic—which maps nodes to latent blocks. This results in a fitted discrete SBM $B_{k \times k}$ that serves as a low-dimensional approximation of $W$.}
    \label{fig:flow}
\end{figure}

In this work, we address the problem of estimating latent network structure from a single observed adjacency matrix, framing it as a semi-relaxed Gromov-Wasserstein (srGW) barycenter problem. Our theoretical analysis is developed for simple (binary undirected) graphs, though 
the method can be extended to handle (multiple) decorated graphs (i.e., with edge features), as shown in our real-data applications.

We define our estimate for the latent generating mechanism as the minimizer of a srGW objective between the observed adjacency matrix ($n\times n$) and a latent block model ($k\times k$), which discretizes the underlying truth; this can be interpreted as a form of nonparametric dimensionality reduction for network data. 
This formulation naturally yields an optimal transport plan $\pi$ that defines a probabilistic alignment mapping observed nodes to latent blocks, while simultaneously inferring block weights. Because the srGW framework operates intrinsically over the space of probabilistic couplings, it bypasses the combinatorial intractability of deterministic node assignments, ensuring its computational tractability. 
A central challenge with probabilistic relaxations is ensuring the resulting soft assignments carry meaningful discrete interpretations. We provide a rigorous theoretical bridge for this and show that the optimal solutions of the semi-relaxed problem concentrate near deterministic assignments. Specifically, we bound the optimality gap between the relaxed solution and its corresponding discrete partition, showing it vanishes at a rate of $O(1/n)$. This result provides a formal justification for employing probabilistic relaxations as a tool to recover discrete assignments: the excess cost of discretizing the optimal coupling decays as the observed network grows, allowing us to obtain \emph{near-optimal} deterministic solutions (NP-hard to find) by projecting \emph{optimal} probabilistic ones (efficiently computable).

These insights are realized through a scalable block-coordinate conditional gradient algorithm that recovers latent block structures and weights efficiently. Our implementation is designed for modern computational workflows: it is GPU-accelerated, scales to networks with thousands of nodes in seconds, and can handle multiple observed networks and graphs with edge attributes.

Our main contributions can be summarized as follows:
\begin{enumerate}[itemsep=0em, topsep=0em]
\item We cast SBM and graphon estimation as a semi-relaxed Gromov--Wasserstein barycenter problem, introducing a probabilistic relaxation of node assignments.
\item We prove that relaxed solutions concentrate near deterministic assignments, establishing a finite-sample bound on the optimality gap (\Cref{prop:optimality_gap}). We show that our estimator is statistically consistent and achieves minimax-optimal rates for both SBMs and smooth graphons (\Cref{thm:estimation_1}), in both the dense and the sparse \citep{klopp_oracle_2017b} regimes (\Cref{cor:sparse}).
\item We design a practical, GPU-accelerated block-coordinate solver (\Cref{alg:BCM_relaxed_SBM}, BCM). Our implementation scales efficiently with network size and demonstrates strong performance on both synthetic and real-world networks.
\end{enumerate}
Overall, a key strength of the proposed framework -- beyond its minimax-optimal guarantees in both the dense and sparse regimes -- is its flexibility: it applies without modification to bounded edge-decorated graphs in the sense of \cite{dufour_inference_2024a}; we use this in our real-data examples (\Cref{sec:data_analysis})

\paragraph{Related Work}
\label{sec:related_work}
Inference of latent network structures has a rich history, particularly through stochastic block models and graphons, see for example \cite{olhede_network_2014b, klopp_oracle_2017b, chatterjee_matrix_2015a, amini_semidefinite_2018a, chan_consistent_2014a}. The maximum likelihood estimator \cite{celisse_consistency_2012a} is statistically optimal but comes with NP‑hard computation. Spectral methods \cite{rohe_spectral_2011, lei_consistency_2015} offer efficiency but are sensitive to degree heterogeneity.
Our work sidesteps the direct labeling problem and leverages the notion of semi-relaxed Gromov-Wasserstein barycenters.
The GW distance \cite{memoli_gromov_2011a,peyre_gromovwasserstein_2016a} provides a metric between graphs that avoids explicit node-to-node correspondence.  It has been used for graph alignment \cite{xu_gromovwasserstein_2019}, graphon estimation \cite{xu_learning_2021b}, and extended to attributed networks \cite{vayer_fused_2020, titouan_optimal_2019a, bauer_zgromovwasserstein_2025a}. The semi-relaxed GW divergence, introduced by \citet{vincent-cuaz_semirelaxed_2022} relaxes the constraint on one marginal, and was successfully applied to graph clustering, dictionary learning \cite{vincent-cuaz_online_2021a} and to non-linear dimensionality reduction problems \cite{clark_generalized_2025,assel2025distributional}. 
In a related setting, \citet{murray_probabilistic_2025} showed that optimal semi-relaxed Gromov–Wasserstein plans are typically deterministic for inner-product–based costs. We generalize their proof strategy to network inference, where costs are induced by adjacency matrices, and establish finite-sample bounds on the optimality gap between relaxed (probabilistic) and constrained (deterministic) assignments; controlling this gap allows us to extend the approach of \cite{gao_rateoptimal_2015a} and prove minimax-optimal rates for stochastic block models and smooth graphons.
 This distinguishes our work from Gromov–Wasserstein barycenter methods for graphs such as \cite{xu_learning_2021b}, which target graph \emph{averaging}, require the barycenter size to be fixed in advance, and rely on entropic regularization that promotes diffuse transport plans -- none of which target the recovery of a discrete latent generating mechanism. In contrast, our conditional-gradient solver, by adapting the barycenter size and yielding near-deterministic couplings, is precisely what allows for statistical guarantees on the latent block structure.

\paragraph{Structure of the paper.} 
 In \Cref{sec:setting}, we introduce network models, including stochastic block models and graphons, and define the semi-relaxed GW objective. \Cref{sec:main-results} presents our main theoretical results, including the relaxation strategy via srGW barycenters (\Cref{subsec:hardtosoft}); bounds on the optimality gap between relaxed and deterministic assignments (\Cref{subsec:optgap}); finite-sample convergence rates for the proposed estimator (\Cref{subsec:learning}); implementation aspects by a conditional gradient algorithm (\Cref{subsec:algo}).
\Cref{sec:data_analysis} provides a concise illustration of the method’s empirical behavior on simulated and real-world networks; additional experiments, comparisons and discussions are deferred to the appendix. 
\Cref{appendix:background} contains additional background material, \ref{appendix:proofs} the proofs, and \ref{appendix:data_analysis} extensive numerical study and comparison to existing methods.

\section{Setting and background}\label{sec:setting}
Given a metric space $\Espace$, we denote by $\Espace^{n \times n}$ the space of all graphs on $n$ nodes represented by their adjacency matrices with edges in $\Espace$. 
In the following we consider undirected graphs with no self-loops so that any adjacency matrix $\Edges \in \Espace^{n \times n}$ is symmetric with zeros on its diagonal, and the edges are binary indicators ($\Espace = \{0,1\}$), but our method extends to broader settings: see \Cref{sec:data_analysis}.

To compare network structures while accounting for their inherent non-identifiability, we leverage the \emph{Gromov-Wasserstein distance} (see \Cref{appendix:gw}), which  metric focuses on the relational structure \emph{within each space}, rather than across. Formally, given two graphs on $n,m$ nodes represented by their adjacency matrices
$\Edges \in \Espace^{n \times n}, F \in \Espace^{m \times m}$ respectively, their GW distance can be written as
\[
\mathrm{GW}_2^2(\Edges, F)
=\inf_\pi
\;\sum_{i,i'=1}^n \sum_{j,j'=1}^m 
\left| \Edges_{i,i'} - F_{j,j'} \right|^2 
\, \pi_{i,j}\,\pi_{i',j'},
\]
where $\pi$ is taken over the set of couplings whose left and right marginals are the discrete uniform distributions on $\{1,\dots,n\}$ and $\{1,\dots,m\}$, respectively. 
The GW distance thus provides a mean to compare graphs in a way that is invariant to permutations of the nodes and works for different numbers of nodes, making it well-suited for our estimation problem.

\paragraph{SBMs and graphons.}
A widely used generative model for networks is the \emph{stochastic block model} \cite{holland_stochastic_1983a}. In the SBM, each node $i$ is assigned to one of $k$ latent blocks or communities, encoded by a membership function $z: [n] \to [k]$. In this framework, the probability that an edge between nodes $i$ and $j$ is present depends only on their block memberships:
\[
\mathbb{P}(\Edges_{ij} = 1 \mid z(i), z(j)) = B_{z(i), z(j)},
\]
where $B \in [0,1]^{k \times k}$ is a symmetric matrix specifying the connection probabilities between blocks.
The SBM can be viewed as a special case of a graphon \cite{lovasz_large_2012a}, where the graphon $W$ is piecewise constant over a partition of $[0,1]^2$ into $k \times k$ blocks.
That is, $W(x, y) = B_{a, b}$, for $x \in I_a, y \in I_b$,
with $\{I_1, \ldots, I_k\}$ a partition of $[0,1]$ determining the blocks.
 Under a more general graphon model generating mechanism, each node $i$ in an observed graph $\Edges \in \Espace^{n \times n}$ is associated with an unobserved latent feature $\xi_i \in [0,1]$, drawn i.i.d., and edges are formed independently given the latent variables:
\[
\mathbb{P}(\Edges_{ij} = 1 \mid \xi_i, \xi_j) = W(\xi_i, \xi_j),
\]
where $W : [0,1]^2 \to [0,1]$ is the underlying (unknown) graphon. This framework provides a nonparametric limit for dense graphs and captures a wide range of network structures beyond classical stochastic block models \cite{bickel_nonparametric_2009a,orbanz_bayesian_2015b}.

\section{Main results}
\label{sec:main-results}
We consider the problem of estimating the generating mechanism of a network from a single observed undirected graph. From an observed adjacency matrix $\Edges$ on $n$ nodes, generated from an unknown graphon $W$, we estimate a block-connectivity matrix $\Bhat\in[0,1]^{k\times k}$ and weights $\nuhat\in\Delta^{k-1}$ parameterising a low-dimensional approximation of $W$. It is well-known that any smooth graphon can be approximated by a SBM on $k$ nodes  arbitrarily well, for $k$ sufficiently large \cite{olhede_network_2014b}.  Therefore, a common strategy is to fit a block model on at most $k = k(n) \ll n$ blocks to the observed graph $E$, for $k$ possibly diverging with $n$. 
Fitting an SBM with $k$ blocks to an observed network on $n$ nodes amounts to partitioning the nodes into the corresponding blocks, and then estimating the connectivity probabilities across blocks. A suitable partition can be found by optimizing over all assignment maps from $n$ nodes to $k$ blocks, which is an intrinsically difficult combinatorial problem.
To tackle the issue, we propose a relaxation of the problem: instead of assigning each node to a single block, we map it to a probability distribution over the blocks.
The advantage of the relaxation is that it yields a practical implementation method which remains tractable even for large graphs. 
Notably, we prove that the gap between the relaxed and combinatorial problems can be bounded for finite samples and that in the large network limit such a convex relaxation comes at virtually no cost.
We will then show that this procedure yields an asymptotically consistent estimator: see \Cref{fig:flow}.

\subsection{From hard to soft assignment}\label{subsec:hardtosoft}

\paragraph{Hard assignment.} Finding the generating mechanism of a SBM corresponds to estimating simultaneously two quantities: the block membership map $z : [n] \to [k]$ and the connection probability matrix $B \in [0,1]^{k \times k}$ \cite{gao_rateoptimal_2015a}.
This can be naturally stated as a least squares problem:
\begin{align}\label{eq:combinatorial-problem}
    \argmin_{\substack{z \in \cZ_{n,k} \\ B \in [0,1]^{k \times k}}} \frac{1}{n^2}\sum_{i,j=1}^n \|\Edges_{i,j}- B_{z(i), z(j)}\|^2_2,
    \qquad 
    \text{with} \quad  \mathcal{Z}_{n,k} = \left\{z:[n] \rightarrow [k] \right\}.
\end{align}

For a fixed assignment map $z \in \cZ_{n,k}$, the value of $B$ minimizing \eqref{eq:combinatorial-problem} is given by:
\begin{equation*}
    \bproj{z}_{a,b} =  \frac{\sum_{i,j=1}^n \Edges_{i,j} \indicator{z(i)=a, z(j)=b}}{\sum_{i,j=1}^n \indicator{z(i)=a, z(j)=b}}.
\end{equation*}
Therefore, optimizing \eqref{eq:combinatorial-problem} over both memberships and block-matrices can be reduced to optimizing solely over memberships $\argmin_{z \in \cZ_{n,k}} \: \cL(\Edges, \bproj{z})$. 
This approach theoretically achieves an estimation error which vanishes like $O_p(n^{-2\alpha/(\alpha+1)} + {\log n}/{n})$ under standard smoothness assumptions \cite{gao_rateoptimal_2015a}.
However, solving such an optimization problem is NP-hard in general \cite{abbe_exact_2015,amini_semidefinite_2018a}: it requires searching over \emph{all} possible assignments of $n$ nodes to $k$ blocks, a discrete set of size $k^n$ which renders exhaustive search computationally intractable already for moderate $n$.

\paragraph{Probabilistic alignment: a tractable alternative.}
To overcome the intractability of searching over all deterministic labelings, we propose to relax the problem by considering \emph{probabilistic labelings}. Instead of assigning each node $i$ to a single block $z(i) \in [k]$, we associate each node with a probability distribution over the blocks. Formally, we define a probabilistic labeling as a matrix $\pi \in \mathbb{R}^{n \times k}$, where $\pi_{i,j}$ represents the probability that node $i$ belongs to the $j$-th block. That is, $\pi$ is in:
\begin{align*}
    \Pi_{[n],k} = \left\{\pi \in \mathbb{R}_+^{n \times k} : \sum_{a=1}^k \pi_{i,a} = \frac{1}{n}, \quad \forall i \in [n] \right\}.
\end{align*}
This can be interpreted as the set of couplings over $[n]\times[k]$ with left marginal fixed and equal to the uniform measure over $[n]$, while the right marginal is left free; this is the semi-relaxed GW formulation of \citet{vincent-cuaz_semirelaxed_2022}.
The relaxed optimization problem now reads:
\begin{align}\label{eq:relaxed-problem}
    \begin{aligned}
    (\pihat, \Bhat) &\in \argmin_{{\pi \in \Pi_{[n], k}, B \in [0,1]^{k \times k}}} \: \cL_{\Edges}\left(\pi, B \right), \text{ where}\\
    \cL_{\Edges}\left(\pi, B \right) &:= \sum_{i,j=1}^n\sum_{a,b=1}^k \|\Edges_{i,j}- {B}_{a,b}\|^2_2 \pi_{i,a} \pi_{j,b}.
    \end{aligned}
\end{align}
Observe that, as for the combinatorial problem, when $\pi$ is fixed there is a unique probability matrix $\bproj{\pi}$ minimizing the objective \eqref{eq:relaxed-problem} given by:
\begin{equation}\label{eq:block_matrix_and_weigths}
\begin{aligned}
\bproj{\pi}_{a,b} = \sum_{i,j=1}^n \frac{\Edges_{i,j} \pi_{i,a} \pi_{j,b}}{\nu(a)\nu(b)}, \qquad  
    \text{with }  \nu(a)=\sum_{i=1}^n \pi_{i,a}, 
\end{aligned}
\end{equation}
where $\nu$ encodes the relative block size associated to $\pi$.
This allows us to rewrite \Cref{eq:relaxed-problem} as:
\begin{equation}
    \label{eq:SRGWB}
    \begin{aligned}
    {\pihat} \in \argmin_{\pi \in \Pi_{[n], k}} \: \cL_{\Edges}(\pi), \qquad \text{with}\quad
    \cL_{\Edges}(\pi) = \cL_{\Edges}(\pi, \bproj{\pi} ).
    \end{aligned}
\end{equation}
This relaxation transforms the original hard combinatorial problem \eqref{eq:combinatorial-problem} into a continuous, easier one: the search space is now a convex set, which is significantly more tractable than the discrete set of deterministic labelings. However, probabilistic couplings are less easily interpretable, rendering their statistical analysis more challenging. Furthermore, the optimization landscape remains complex: in effect, we are burdened with the task of solving a GW objective, with cost function depending on the coupling, producing an intricate structure with no inherent simple optimization procedure.

In the optimization problem (4), we seek a barycenter structure that can be interpreted as a form of dimensionality reduction in the form of a lower-complexity block model that preserves the main relational structure of the graphs. The grouping of nodes into blocks enforces information sharing across nodes and allows for estimation, even if clustering is not our primary goal\footnote{see \citet[section 3.6]{gao_rateoptimal_2015a} for a discussion between clustering and parameter estimation}.

\subsection{Optimal soft assignments are nearly deterministic}\label{subsec:optgap}
As observed by \citet{murray_probabilistic_2025}, since the right marginal of the coupling is free, optimal $\pi$ tend to concentrate on deterministic assignments: each node is typically mapped to a single block. We first study the optimal coupling for a fixed block matrix, then combine with the optimal $B$ from \eqref{eq:block_matrix_and_weigths} to obtain rates for the estimator.

Let $B \in [0,1]^{k\times k}$ be a fixed block matrix and consider the problem of finding the optimal coupling $\pi \in \Pi_{[n], k}$ for fixed block structure $B$:
\begin{equation}
    \label{eq:SRGW}
     \argmin_{\pi \in \Pi_{[n], k}} \: \cL_{\Edges,B}, \qquad \text{where} \quad \cL_{\Edges,B} = \cL_{\Edges}(\pi,B)
\end{equation}
Before targeting \eqref{eq:SRGWB}, a simpler question is to understand the nature of the optimal solutions of \eqref{eq:SRGW}. 
We extend the marginal minimization result appearing in \citet[Thm. 10]{murray_probabilistic_2025}  and show that optimal solutions to \eqref{eq:SRGW} are typically not diffuse.

\begin{lemma}\label{lem:almostmap}
    Fix $B \in [0,1]^{k\times k}$ and $\Edges\in \Espace^{n \times n}$. If $\tilde\pi$ is a local-minima of $\cL_{\Edges,B}$. Then:
\[    \tilde\pi_{i,a} > 0 \Longrightarrow a \in \argmin_{\circ \in [k]} \sum_{j=1}^n \sum_{b=1}^k \|\Edges_{i,j}- B_{\circ, b}\|^2_2 \, \tilde\pi_{j,b},
\qquad \text{for all} \quad (i, a) \in [n] \times [k].
\]
\end{lemma}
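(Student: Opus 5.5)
We argue by contradiction, using a first-order perturbation that moves mass within a single row of $\tilde\pi$. Only the left marginal is constrained, and it is constrained row by row. So shifting mass $\varepsilon$ from $\tilde\pi_{i,a}$ to $\tilde\pi_{i,c}$, for a fixed node $i$ and blocks $a\neq c$, stays inside $\Pi_{[n],k}$ as long as $0\le \varepsilon\le \tilde\pi_{i,a}$. Write $C_{i,\circ}(\pi)=\sum_{j=1}^n\sum_{b=1}^k \|\Edges_{i,j}-B_{\circ,b}\|_2^2\,\pi_{j,b}$ for the quantity in the statement. Suppose $\tilde\pi_{i,a}>0$ but $a\notin\argmin_\circ C_{i,\circ}(\tilde\pi)$, and pick $c$ with $C_{i,c}(\tilde\pi)<C_{i,a}(\tilde\pi)$. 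We then show that $\pi^\varepsilon=\tilde\pi+\varepsilon(e_{i,c}-e_{i,a})$ strictly decreases $\cL_{\Edges,B}$ for all small $\varepsilon>0$. This contradicts local minimality.

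The key step is to expand the quadratic form $\cL_{\Edges,B}(\pi)=\sum_{i,j,a,b}\|\Edges_{i,j}-B_{a,b}\|^2\pi_{i,a}\pi_{j,b}$ along the direction $D=e_{i,c}-e_{i,a}$. This gives
\[
\cL_{\Edges,B}(\tilde\pi+\varepsilon D)=\cL_{\Edges,B}(\tilde\pi)+\varepsilon\,\langle\nabla\cL_{\Edges,B}(\tilde\pi),D\rangle+\varepsilon^2\,Q(D),
\]
where $Q(D)$ is a constant independent of $\varepsilon$. Next we compute the gradient. Because $\Edges$ and $B$ are symmetric, the kernel $\|\Edges_{i,j}-B_{a,b}\|^2$ is symmetric under $(i,a)\leftrightarrow(j,b)$, so $\partial\cL_{\Edges,B}/\partial\pi_{i,\circ}=2C_{i,\circ}(\pi)$. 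If we did not want to rely on symmetry of $B$, we would simply symmetrize the kernel; the statement implicitly uses the symmetric form. The first-order term is therefore $2\varepsilon\,(C_{i,c}(\tilde\pi)-C_{i,a}(\tilde\pi))<0$.

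The second-order term is the only point needing care. Since $D$ is supported on row $i$, we get $Q(D)=\|\Edges_{i,i}-B_{c,c}\|^2-2\|\Edges_{i,i}-B_{c,a}\|^2+\|\Edges_{i,i}-B_{a,a}\|^2$. This is bounded in absolute value by a constant, since all entries lie in $[0,1]$. Hence for $\varepsilon$ small enough the negative linear term dominates, and $\cL_{\Edges,B}(\pi^\varepsilon)<\cL_{\Edges,B}(\tilde\pi)$ with $\pi^\varepsilon$ arbitrarily close to $\tilde\pi$. This contradicts the assumption that $\tilde\pi$ is a local minimum, and proves the implication for every pair $(i,a)$.

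I expect the main obstacle to be bookkeeping rather than substance. One issue is the gradient factor: it must be handled correctly so that the criterion is exactly $C_{i,\circ}$, with no asymmetric partner term. The other is checking that the perturbation is admissible, i.e. that nonnegativity holds because $\tilde\pi_{i,a}>0$ and the row sum is unchanged. The argument is essentially the KKT condition for minimizing a smooth function over a product of scaled simplices.
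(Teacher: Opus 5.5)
Your proof is correct and follows essentially the paper's argument. The paper also shifts a small amount of mass $\psi=\frac{\epsilon}{n}\tilde\pi_{i,a}(\delta_{i,a'}-\delta_{i,a})$ within row $i$ toward a minimizing block, expands the quadratic objective, and lets the strictly negative first-order term dominate the $O(\epsilon^2)$ second-order term to contradict local minimality. Two points you make explicit are used only implicitly in the paper: that the perturbation is admissible, and that the gradient equals $2C_{i,\circ}$ only when $B$ and $\Edges$ are symmetric.
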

In words: at any local minimum, the blocks supporting positive mass for node $i$ are those minimising the marginal cost given the assignments of all other nodes. Two consequences follow: (i) when this argmin is unique, $\tilde\pi$ \emph{must} be deterministic; (ii) when ties occur, we can rebalance the mass at node $i$ onto a single tied block without violating the first-order condition. Carefully handling all such ties yields a deterministic labeling whose excess cost is $O(1/n)$.

\begin{definition}[Rounding map]
For $\pi \in \Pi_{[n],k}$, define the \emph{rounding map}
$\operatorname{z} : \Pi_{[n],k} \to \mathcal{Z}_{n,k}$ by
\begin{equation}\label{eq:rounding}
    \operatorname{z}(\pi)
    \;=\;
    \left(\min\!\left\{a \in \argmax_{a'\in[k]}\,\pi_{i,a'}\right\}, \: i \in [n]\right),
\end{equation}
assigning each node to its heaviest block and breaking ties by smallest index.
\end{definition}

\begin{proposition}\label{prop:optimality_gap}
    Let $B \in [0,1]^{k\times k}$, and let $\tilde \pi$ be a local minimum for \eqref{eq:SRGW}.
    Then, there exists a deterministic labeling $\tilde z = \operatorname{z}(\tilde\pi)$ such that the optimality gap satisfies:
    $$
    0 \leq \cL_{\Edges, B}(\tilde z) - \cL_{\Edges, B}(\tilde \pi) \leq n^{-1}\|{B}\|_\infty^2.
    $$
\end{proposition}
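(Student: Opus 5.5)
The plan is to compare $\cL_{\Edges,B}(\tilde\pi)$ and $\cL_{\Edges,B}(\tilde z)$ by an exact second-order expansion of the quadratic objective. The first-order term will vanish by \Cref{lem:almostmap}, so everything reduces to controlling a quadratic remainder. I identify $\tilde z$ with the coupling $\pi^{\tilde z}_{i,a}=n^{-1}\indicator{a=\tilde z(i)}$ and write $\Delta=\pi^{\tilde z}-\tilde\pi$. Each row $\Delta_i$ sums to zero and has $\|\Delta_i\|_1\le 2/n$. Setting $M_{ij}(a,b)=(\Edges_{i,j}-B_{a,b})^2$ and $\cL_{\Edges,B}(\pi)=\sum_{i,j}\langle\pi_i,M_{ij}\pi_j\rangle$, symmetry of $\Edges$ gives
\[
\cL_{\Edges,B}(\tilde\pi+\Delta)-\cL_{\Edges,B}(\tilde\pi)=2\sum_{i}\langle \Delta_i, c_i\rangle+\sum_{i,j}\langle\Delta_i,M_{ij}\Delta_j\rangle ,
\]
where $c_i(a)=\sum_{j,b}(\Edges_{i,j}-B_{a,b})^2\tilde\pi_{j,b}$ is the marginal cost appearing in \Cref{lem:almostmap}.

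\emph{Step 1 (linear term).} The rounding map picks $\tilde z(i)\in\argmax_a\tilde\pi_{i,a}$, and this block carries positive mass. By \Cref{lem:almostmap}, both $\tilde z(i)$ and every block in the support of $\tilde\pi_i$ lie in $\argmin c_i$. Hence $c_i$ is constant on the support of $\Delta_i$, and since $\sum_a\Delta_{i,a}=0$ we get $\langle\Delta_i,c_i\rangle=0$. This is exactly the ``ties can be rebalanced for free at first order'' observation.

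\emph{Step 2 (lower bound).} Along the segment $t\mapsto\tilde\pi+t\Delta$, which stays in $\Pi_{[n],k}$ by convexity, the objective is a quadratic in $t$ with zero slope at $t=0$. Local minimality of $\tilde\pi$ forces its curvature $\sum_{i,j}\langle\Delta_i,M_{ij}\Delta_j\rangle$ to be nonnegative. Since the gap equals this curvature, we obtain $0\le\cL_{\Edges,B}(\tilde z)-\cL_{\Edges,B}(\tilde\pi)$.

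\emph{Step 3 (upper bound, the main obstacle).} Expanding $M_{ij}=\Edges_{i,j}-2\Edges_{i,j}B_{a,b}+B_{a,b}^2$ for binary edges, the first term dies against the zero row sums. For the diagonal pairs $i=j$, $\Edges_{i,i}=0$ leaves only $\langle\Delta_i,B^{\circ2}\Delta_i\rangle$. I would bound this using zero row sum and $\|\Delta_i\|_1\le 2/n$: writing $\Delta_i=u-v$ with $u,v\ge0$ of mass at most $1/n$ each, and using $0\le B_{a,b}^2\le\|B\|_\infty^2$, gives $\langle\Delta_i,B^{\circ2}\Delta_i\rangle\le n^{-2}\|B\|_\infty^2$. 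Summing over $i$ yields exactly $n^{-1}\|B\|_\infty^2$.

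The difficulty is the off-diagonal interaction $\sum_{i\ne j}\langle\Delta_i,M_{ij}\Delta_j\rangle$. A priori this is of order $n^2\cdot n^{-2}$, so I must show it is nonpositive. My first attempt would be to avoid rounding all nodes at once and instead round them one at a time, $\pi^{(0)}=\tilde\pi,\pi^{(1)},\dots,\pi^{(n)}=\pi^{\tilde z}$. At step $i$ only row $i$ changes, so the increment is the linear term $\langle\Delta_i,c^{(i-1)}_i\rangle$ plus the purely diagonal term bounded above. The task then becomes showing that the updated marginal cost $c_i^{(i-1)}$ (computed with already-rounded rows) still has $\tilde z(i)$ among its minimisers on the support of $\tilde\pi_i$, or at least that the accumulated linear terms are nonpositive. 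For this I would invoke local minimality once more: if moving row $j$ had strictly increased the cost seen by node $i$'s tied blocks in the wrong direction, then a joint perturbation of rows $i$ and $j$ would be a descent direction at $\tilde\pi$, contradicting local optimality. If this argument closes, summing the $n$ diagonal increments gives the claimed $n^{-1}\|B\|_\infty^2$ bound.
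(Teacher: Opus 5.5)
\textbf{There is a genuine gap in Step 3, and it cannot be closed: the upper bound is false for $\tilde z=\operatorname{z}(\tilde\pi)$.} Your Steps 1 and 2 are correct. In fact your curvature argument along $\tilde\pi+t\Delta$ is a clean proof of the lower bound for a local, not just global, minimum. The problem is that the off-diagonal interaction can be positive and of order one.

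Take $\Edges=0$, $k=2$, $B=I_2$. Then $\cL_{\Edges,B}(\pi)=\nu_1^2+\nu_2^2\ge\frac12$, so $\tilde\pi_{i,a}=\frac{1}{2n}$ is a global, hence local, minimiser. Yet $\operatorname{z}(\tilde\pi)\equiv 1$ has cost $1$, a gap of $\frac12>\|B\|_\infty^2/n$ for $n\ge 3$. In your decomposition, $Q(D)=\|\sum_i D_i\|_2^2$ and every $\Delta_i=\frac{1}{2n}(1,-1)$ points the same way. The diagonal part is $\frac{1}{2n}$, consistent with a per-node bound, while the off-diagonal part is $\frac{n-1}{2n}$.

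Exact ties are not the issue. Give $n-1$ nodes the row $\frac1n\bigl(\frac{n}{2(n-1)},1-\frac{n}{2(n-1)}\bigr)$ and one node the row $\frac1n(0,1)$. This keeps $\nu=(\frac12,\frac12)$ and makes every argmax unique, yet the gap is still $\frac12-\frac2n+\frac{2}{n^2}$.

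\textbf{Why your proposed repair fails.} Local minimality bounds the curvature on the tie subspace only from below ($Q\ge 0$ there), which is exactly what Step 2 used; it gives no upper bound. In the example, moving two tied nodes the same way has positive curvature and moving them oppositely has zero curvature, so no joint perturbation is a descent direction and no contradiction arises. Meanwhile, in your one-at-a-time scheme, once $r-1$ nodes sit in block $1$, block $1$ is costlier by $(r-1)/n$ in every remaining node's marginal cost. These linear terms add up to $\frac{n-1}{2n}$.

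\textbf{The paper's proof has the same gap.} It shows $\mathrm{(D1)}=0$ and bounds $\mathrm{(D2)}$ for a single node, then says \emph{repeating this operation for all $i$}. After the first move the coupling is no longer stationary, so $\mathrm{(D1)}=0$ is unjustified at later steps.

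\textbf{What does hold uses a different labeling.} Round sequentially, sending node $i$ to a vertex that minimises the affine part of $\cL_{\Edges,B}$ in its row, evaluated at the current, partially rounded coupling. For symmetric $B$ this is the marginal cost with the self-term $j=i$ removed. With the other rows fixed, $\cL_{\Edges,B}$ equals that affine function of the row $x$ plus $x^\top B^{\circ 2}x\in[0,\|B\|_\infty^2/n^2]$, using $\Edges_{i,i}=0$. So each move costs at most $\|B\|_\infty^2/n^2$. This gives a deterministic $z$ with $\cL_{\Edges,B}(z)\le\cL_{\Edges,B}(\pi)+\|B\|_\infty^2/n$ for every $\pi\in\Pi_{[n],k}$, with no local minimality needed. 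That suffices for \Cref{lem:upper_bound_mappings} if $\hat z$ is redefined accordingly. However, it is not $\operatorname{z}(\tilde\pi)$, and the lower bound then requires $\tilde\pi$ to be a global minimiser.

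A minor slip as well: with $u,v$ of mass up to $1/n$, your diagonal estimate gives $2\|B\|_\infty^2/n^2$ per node, not $\|B\|_\infty^2/n^2$.
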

The bound is independent of $k$: the gap remains $O(1/n)$ whether $k$ is fixed or grows with $n$, which is what enables the graphon rate where $k\asymp n^{1/(\alpha\wedge 1+1)}$.

\begin{remark}[From local minima to Frank-Wolfe stationarity]\label{rmk:fw_stationary}
    The Frank-Wolfe (FW) gap at a coupling $\tilde\pi\in\Pi_{[n],k}$ %
    vanishes exactly when $\tilde\pi$ itself minimises the linear functional $\pi\mapsto\langle\nabla\cL_{\Edges,B}(\tilde\pi),\,\pi\rangle$ over $\Pi_{[n],k}$. By the product structure of $\Pi_{[n],k}$, this minimisation decouples across nodes and assigns each node to a block minimising $\cJ_{\tilde\pi}(i,\cdot)$, so $\mathrm{gap}(\tilde\pi)=0$ is exactly the per-node argmin condition $\tilde\pi_{i,a}>0\Rightarrow a\in\argmin_\circ \cJ_{\tilde\pi}(i,\circ)$ of \Cref{lem:almostmap}. Inspection of the proofs (\Cref{appendix:proofs}) shows that \Cref{lem:almostmap,prop:optimality_gap} use only this condition; the conclusions therefore apply to any FW-stationary coupling, including the output of the conditional-gradient solver of \Cref{subsec:algo} at termination, and -- via \Cref{thm:estimation_1,cor:sparse} -- to the resulting estimate $(\Bhat,\nuhat)$. Whether \Cref{alg:BCM_relaxed_SBM} further converges to a global minimum remains open \citep{vincent-cuaz_semirelaxed_2022}.
\end{remark}

\subsection{Network learning with srGW barycenters}\label{subsec:learning}

To approximate $W$ from $\Edges$ we fit a block model $(B,\nu)$ with adaptive $k$ to $\Edges$. Rather than solving the NP-hard \eqref{eq:combinatorial-problem} directly, we solve the relaxed \eqref{eq:SRGWB} and exploit \Cref{prop:optimality_gap} to extract a near-optimal deterministic assignment via the following three-step procedure:
\begin{enumerate}[itemsep=0em, topsep=0em]
    \item \textbf{Solve the relaxed problem:} 
    $\hat \pi \in \argmin \{\cL_{\Edges}(\pi): \; \pi \in \Pi_{ [n], k } \} $ (via \Cref{alg:BCM_relaxed_SBM}).
    \item \textbf{Extract a deterministic assignment:} $\hat z = \operatorname{z}(\hat \pi)$ by \cref{eq:rounding}.
    \item \textbf{Compute the block estimate:} $(\hat B, \hat\nu) = \bigl(\operatorname{B}(\hat z),\, \nu(\hat z)\bigr)$ via \cref{eq:block_matrix_and_weigths}.
\end{enumerate}

The procedure described above enjoys the following optimal rates of convergence.
\begin{theorem}\label{thm:estimation_1} Let $\Edges \in \Omega^{n\times n}$. 
\textbf{(i) SBM.}%
Suppose that $\Edges$ is generated from a stochastic block model with parameters $(\Btrue, \nutrue)$ and true block assignment $\ztrue$. Let $\left(\Bhat, \nuhat\right)$ be as above. For any $C'>0$ there exists universal $C>0$ such that with probability at least $1-\exp(-C'n\log k)$:
    \[
    \frac{1}{n^2}\sum_{i,j}\left\|\Btrue_{\ztrue(i), \ztrue(j)} - \Bhat_{\zhat(i), \zhat(j)} \right\|^2 \leq C \left(\frac{k^2}{n^2} + \frac{\log k}{n}\right).
    \]

\textbf{(ii) Graphon.} Let $W$ be an $\alpha$-Hölder-smooth graphon and $\Edges$ be generated from $W$, with $\thetatrue = \mathbb{E}[\Edges]$. Setting $k=\lceil n^{1/(\alpha \wedge 1+1)}\rceil$, for any $C'>0$ there exists universal $C>0$ such that with probability at least $1-\exp(-C'n\log n)$,
    \[
    \frac{1}{n^2}\sum_{i,j}\left\|\thetatrue_{ij} - \Bhat_{\zhat(i), \zhat(j)} \right\|^2 \leq C\left(n^{-\frac{2\alpha}{\alpha+1}}+\frac{\log n}{n}\right).
    \]
\end{theorem}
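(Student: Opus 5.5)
We will reduce \Cref{thm:estimation_1} to the oracle analysis of \citet{gao_rateoptimal_2015a} for least-squares block fitting. That analysis only needs the fitted labeling $\zhat$ to be near-optimal for \eqref{eq:combinatorial-problem}, up to an additive error of order $1/n$ in the normalised objective.

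\textbf{Step 1: $\zhat$ is near-optimal for the hard problem.} Write $\theta(z)_{ij}=\bproj{z}_{z(i),z(j)}$ and $\cL_\Edges(z)=\cL_\Edges(z,\bproj z)$. Deterministic labelings embed into $\Pi_{[n],k}$ via $\pi_{i,a}=n^{-1}\indicator{z(i)=a}$, so
\[
\min_\pi \cL_\Edges(\pi)\le \min_z \cL_\Edges(z).
\]
Let $\hat\pi$ be the relaxed minimiser. Then $\hat\pi$ is also a minimiser, hence a local minimum (or FW-stationary point, by \Cref{rmk:fw_stationary}), of $\cL_{\Edges,\bproj{\hat\pi}}$. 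Applying \Cref{prop:optimality_gap} with $B=\bproj{\hat\pi}\in[0,1]^{k\times k}$ gives
\[
\cL_{\Edges,B}(\zhat)\le \cL_\Edges(\hat\pi)+n^{-1}.
\]
Re-optimising $B$ for fixed $\zhat$ only decreases the objective, so
\[
\cL_\Edges(\zhat)\le \min_z \cL_\Edges(z)+n^{-1}.
\]
The objective $\cL_\Edges$ is normalised by $1/n^2$ relative to the sum of squares, because $\pi$ carries mass $1/n$ per node. Therefore
\[
\|\Edges-\theta(\zhat)\|_F^2\le \min_{z,B}\|\Edges-\theta_{z,B}\|_F^2+n,
\]
where $\theta_{z,B}$ is the corresponding $n\times n$ block matrix.

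\textbf{Step 2: basic inequality.} Let $\theta^\dagger$ be the best $k$-block approximation of $\thetatrue$. In case (i) this is $\thetatrue$ itself; in case (ii) it is the Hölder approximation with equal-size blocks, whose error is $\|\thetatrue-\theta^\dagger\|_F^2\lesssim n^2 k^{-2(\alpha\wedge1)}$. Expand $\Edges=\thetatrue+\varepsilon$ in the inequality of Step 1 to obtain
\[
\|\theta(\zhat)-\thetatrue\|_F^2\le \|\theta^\dagger-\thetatrue\|_F^2+2\langle \varepsilon,\theta(\zhat)-\theta^\dagger\rangle+n.
\]

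\textbf{Step 3: control the cross term.} We follow Gao et al. and bound
\[
\langle\varepsilon,\theta(\zhat)-\theta^\dagger\rangle\le \|\theta(\zhat)-\theta^\dagger\|_F\,\sup_{z}\sup_{u}\langle\varepsilon,u\rangle,
\]
where the inner supremum runs over unit-norm matrices $u$ that are piecewise constant on the $k\times k$ blocks of $z$, possibly combined with the blocks of $\theta^\dagger$. For fixed $z$ this set is a linear space of dimension at most $k^2$ (or $2k^2$ after combining), so the supremum is the norm of a projection of bounded noise. A Bernstein/Hanson–Wright bound and a union bound over the $k^n$ labelings then give, with probability at least $1-\exp(-C'n\log k)$,
\[
\sup_z\sup_u\langle\varepsilon,u\rangle^2\lesssim k^2+n\log k.
\]
Inserting this and using $2xy\le x^2/2+2y^2$ to absorb $\|\theta(\zhat)-\theta^\dagger\|_F$ into the left side gives
\[
\|\theta(\zhat)-\thetatrue\|_F^2\lesssim \|\theta^\dagger-\thetatrue\|_F^2+k^2+n\log k+n.
\]
The extra $+n$ from Step 1 is dominated by $n\log k$. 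Dividing by $n^2$ yields (i). For (ii), the choice $k\asymp n^{1/(\alpha\wedge1+1)}$ balances $k^{-2(\alpha\wedge1)}$ against $k^2/n^2$, and a short case check ($\alpha\ge1$ versus $\alpha<1$) gives $n^{-2\alpha/(\alpha+1)}+\log n/n$.

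\textbf{Main obstacle.} The delicate point is Step 1: \Cref{prop:optimality_gap} holds for fixed $B$, whereas $\hat\pi$ optimises jointly over $(\pi,B)$. We resolve this by freezing $B=\bproj{\hat\pi}$ and noting that $\hat\pi$ stays stationary for the frozen problem: its FW condition with respect to $\pi$ is unchanged, since $\partial_B\cL=0$ at $\bproj{\hat\pi}$. We must also keep track of the $1/n^2$ versus Frobenius scaling so that the gap translates into an additive $O(n)$ in squared Frobenius norm. The remaining ingredients are routine adaptations of Gao et al.
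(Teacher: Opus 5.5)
Your proposal is essentially the paper's own proof. Step 1 is \Cref{lem:upper_bound_mappings}: freeze $B=\bproj{\hat\pi}$, apply \Cref{prop:optimality_gap} to $\operatorname{z}(\hat\pi)$, then re-optimise $B$. Steps 2--3 are the \citet{gao_rateoptimal_2015a} basic inequality with an extra additive $n$, and your single projection onto the (at most $2k^2$-dimensional) sum of the two block-constant subspaces is a tidier packaging of the paper's two-term cross-term bound and quadratic inequality.

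One caution, which applies equally to the paper. \Cref{prop:optimality_gap} is not valid as stated for the argmax rounding. Its proof rounds nodes one at a time while using the tie condition of \Cref{lem:almostmap} at the original coupling, and that condition fails once other nodes have moved. For example, with $\Edges=0$ and $B=I_2$, the uniform coupling globally minimises $\cL_{\Edges,B}=\nu(1)^2+\nu(2)^2$, yet $\operatorname{z}$ sends every node to block $1$ and the cost rises by $1/2$. A greedy rounding avoids this: send each node in turn to a block minimising the current marginal cost $\cJ$. This keeps every first-order term nonpositive and gives a labeling within $2\|B\|_\infty^2/n$ of any coupling, which makes your Step 1 rigorous for that rounding.
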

\emph{Both rates are minimax-optimal} \cite{gao_rateoptimal_2015a,klopp_oracle_2017b}. The rates are usually attained by the (NP-hard) least-squares estimator over discrete assignments, whereas \Cref{thm:estimation_1} obtains them from the minimiser of a different, continuously relaxed objective -- the srGW barycenter loss in \eqref{eq:SRGWB}. The clustering rate $\log k/n$ in (i) accounts for the unknown node structure; in (ii) the rate $n^{-2\alpha/(1+\alpha)}$ is the standard nonparametric rate for $\alpha$-Hölder functions, and $\log n/n$ dominates when $\alpha\geq 1$. Unsurprisingly, since the optimality gap in \cref{prop:optimality_gap} depends on $\|B\|_\infty^2$, we can refine the rates under sparsity.

\begin{corollary}[Sparse regime]\label{cor:sparse}
Assume the true parameter scakes as  $\thetatrue_{ij}=\rho_n W^*(\xi_i,\xi_j)$ with $W^*:[0,1]^2\to[0,1]$ and $\rho_n\!\in\!(0,1]$ dictates the sparsity level. Assuming that all blocks contain a non-vanishing fraction of the nodes, $(\Bhat,\nuhat)$ match the minimax rates established by \citet{klopp_oracle_2017b} up to an additive gap of order $\|\bproj{\hat \pi}\|_\infty^2 n^{-1}$.
\end{corollary}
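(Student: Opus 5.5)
The plan is to reduce the sparse case to the dense argument behind \Cref{thm:estimation_1}, rescaling the noise and the approximation error by $\rho_n$ and tracking the optimality gap of \Cref{prop:optimality_gap} explicitly. Fix the output $\hat\pi$ of the relaxed problem, with $\hat z=\operatorname{z}(\hat\pi)$. By \Cref{rmk:fw_stationary}, \Cref{prop:optimality_gap} applies to $\hat\pi$ at the block matrix $\bproj{\hat\pi}$. Using that $\bproj{\hat z}$ minimises $\cL_{\Edges}(\hat z,\cdot)$, that $\hat\pi$ globally minimises the relaxed loss over a set containing all deterministic labelings, and that every $z\in\cZ_{n,k}$ embeds in $\Pi_{[n],k}$, I obtain the basic inequality
\[
\cL_{\Edges}(\hat z,\bproj{\hat z})\le \cL_{\Edges}(\hat z,\bproj{\hat\pi})\le \cL_{\Edges}(\hat\pi)+n^{-1}\|\bproj{\hat\pi}\|_\infty^2\le \min_{z,B}\cL_{\Edges}(z,B)+n^{-1}\|\bproj{\hat\pi}\|_\infty^2 .
\]
So $\hat z$ is an approximate least-squares estimator with additive slack $\delta_n:=n^{-1}\|\bproj{\hat\pi}\|_\infty^2$. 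This is the same reduction as in the dense proof, except that here I keep the slack rather than bounding $\|B\|_\infty\le 1$.

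Next I expand the squares, writing $\Edges=\thetatrue+\varepsilon$ and $\hat\theta_{ij}=\bproj{\hat z}_{\hat z(i)\hat z(j)}$. For any oracle block approximation $\theta^{\mathrm{or}}$ with $k$ blocks, this gives the standard oracle inequality
\[
\|\hat\theta-\thetatrue\|_F^2\le \|\theta^{\mathrm{or}}-\thetatrue\|_F^2+2\langle\varepsilon,\hat\theta-\theta^{\mathrm{or}}\rangle+n^2\delta_n .
\]
From here I follow \citet{klopp_oracle_2017b}. The cross term is controlled uniformly over the $k^n$ labelings by a Bernstein-type bound, since the variance of the entries is at most $\rho_n$. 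Combined with a peeling or union bound, this yields $\rho_n(k^2+n\log k)$ up to constants.

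For the approximation term I use the Hölder or step-function approximation of $\rho_nW^*$ at scale $k$. Dividing through by $n^2$ and using $\sup_{x\le a\le 2x}$-type absorption of the cross term gives Klopp et al.'s rate plus $\delta_n$. The error on $\nu$ follows because blocks have non-vanishing mass. This assumption is also what lets me convert per-entry errors into block-level errors and avoid degenerate blocks where the Bernstein bound would fail.

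The main obstacle is the stochastic term in the sparse regime. The uniform deviation must scale with $\rho_n$, not with the bounded range $1$, so I need a Bernstein inequality with variance proxy $\rho_n$. This requires $\rho_n\gtrsim \log k/n$ so that the sub-exponential part does not dominate. I would handle it by invoking Klopp et al.'s concentration lemma directly, applying it to the fixed-$z$ projections $\bproj{z}$. I would also remark that $\|\bproj{\hat\pi}\|_\infty$ is itself of order $\rho_n$ with high probability when blocks are non-vanishing, so the slack $\delta_n$ is of order $\rho_n^2/n$, negligible next to $\rho_n\log k/n$.
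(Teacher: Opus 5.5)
Your proposal is correct and follows the paper's route. It derives the slack inequality $\cL_{\Edges}(\hat z)\le \cL_{\Edges}(\zols)+n^{-1}\|\bproj{\hat\pi}\|_\infty^2$ (the paper's \Cref{lem:upper_bound_mappings}), expands it into the analogue of Klopp et al.'s basic inequality with the extra term $n^2D_n$, and then reuses their sparse concentration argument unchanged. Your closing remark that $\|\bproj{\hat\pi}\|_\infty=O(\rho_n)$ with high probability (so the gap is $O(\rho_n^2/n)$) goes beyond the paper, which explicitly leaves this open; it would need its own uniform argument over the data-dependent $\hat\pi$, but it is not required for the corollary as stated.
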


\paragraph{Comparison with prior work.} Our argument adapts the marginal-minimisation template of \citet{murray_probabilistic_2025}. Two refinements are needed: (a) \Cref{lem:almostmap,prop:optimality_gap} hold for \emph{local} minima, since the conditional-gradient solver only certifies stationary points \cite{lacoste-julien_convergence_2016}; (b) a direct adaptation yields an $O(k/n)$ gap, insufficient when $k\!\asymp\! n^{1/(\alpha\wedge 1+1)}$, so \Cref{prop:optimality_gap} sharpens it to $O(\|B\|_\infty^2/n)$ independent of $k$ via a refined ties argument. In the sparse case, the additive gap $\|\bproj{\pihat}\|_\infty^2$ could be further controlled, but we leave this direction to future work. We also discuss in \Cref{rk:xu_comparison} the relation with structured GW-barycenter methods \cite{xu_learning_2021b}: while these methods address graph averaging via entropically regularised barycenters, our goal is recovery of the latent graphon structure through near-deterministic low-dimensional couplings, making the two approaches related but not directly comparable. See \Cref{rk:xu_comparison} for details.

\subsection{Block-coordinate descent algorithm}\label{subsec:algo}

We now describe an efficient implementation  for solving \cref{eq:SRGWB}. Since the objective jointly optimizes over the block matrix and the assignment map, we adopt an alternating scheme that iteratively updates each component \cite{vincent-cuaz_semirelaxed_2022}. For a fixed labeling, the block matrix $B$ admits a closed-form solution \eqref{eq:block_matrix_and_weigths}, so no iterative optimization is required. The algorithm therefore alternates between a Gromov-Wasserstein transport update for $\pi$, and a closed-form barycenter update for $B$.
We give pseudo-code in \cref{alg:BCM_relaxed_SBM}. The inner step relies on the Frank-Wolfe algorithm, which converges to stationary points at rate $O(1/\sqrt{T})$ in the primal-dual gap for non-convex objectives \cite{lacoste-julien_convergence_2016} and is sketched in \cref{alg:SRGWB_pot}. The dominant cost is the conditional-gradient inner loop, which we parallelise on GPU through the \texttt{POT} library \cite{flamary_pot_2021a}; the closed-form $\tilde B$-update is $O(nk)$ and runs on either device.
\begin{algorithm}[h]
\caption{Block-Coordinate Minimization (BCM) for srGWB}
\label{alg:BCM_relaxed_SBM}
\begin{algorithmic}[1]

\STATE \textbf{Input:} Graph $\Edges \in \{0,1\}^{n\times n}$; number of blocks $k$
\STATE Initialize $\pi^{(0)} \in \Pi_{[n],k}$

\FOR{$t = 0,1,2,\dots, T-1$}
    \STATE
    \(
    B^{(t+1)}_{a,b}
    =
    \frac{
        \sum_{i,j=1}^{n} \Edges_{i,j}\, \pi^{(t)}_{i,a}\, \pi^{(t)}_{j,b}
    }{
        \sum_{i,j=1}^{n}\pi^{(t)}_{i,a}\, \pi^{(t)}_{j,b}
    },
    \) \hfill \textbf{(Update $B$)}
    \STATE
    \(
    \pi^{(t+1)}
    =
    \operatorname*{argmin}\limits_{\pi \in \Pi_{[n],k}}
    \sum\limits_{i,j=1}^{n} \sum\limits_{a,b=1}^{k}
    \bigl(\Edges_{i,j} - B^{(t+1)}_{a,b}\bigr)^2
    \pi_{i,a}\, \pi_{j,b},
    \)\hfill (\textbf{Optimize $\pi$:}  \Cref{alg:SRGWB_pot} )

\ENDFOR; \textbf{Output:} $\pi^T$
\end{algorithmic}
\end{algorithm}

\paragraph{Initialisation and choice of $k$.}\label{par:choice_k}
Since \eqref{eq:SRGWB} is non-convex, $\pi^{(0)}$ matters: \Cref{appendix:data_analysis} compares three strategies ($k$-means on the spectral embedding, spectral clustering, product/uniform), with $k$-means most stable on graphons and product fastest on SBMs (\Cref{fig:continous_all,fig:sbm_sparsity,fig:sbm_true_k}). We use $k$-means by default, since it is most stable across graphon settings.
The graphon rate uses oracle $k_{\rm opt}=\lceil n^{1/(\alpha\wedge1+1)}\rceil$ depending on the unknown smoothness $\alpha$. We use the following self-pruning scheme: pick a conservative upper bound $k_{\max}$, run \Cref{alg:BCM_relaxed_SBM} with $k=k_{\max}$, and report the effective block count $\widehat k=|\{a:\widehat\nu(a)>0\}|$ from the pruned right marginal. The free right marginal in \eqref{eq:SRGWB} acts as implicit complexity regularisation: blocks that do not improve the srGW objective receive zero mass and drop out, so $\widehat k\le k_{\max}$ adapts to the data without an explicit penalty. Across our \totalsynthexp synthetic settings, $k_{\max}=\lceil\sqrt n\rceil$ (graphons) and $k_{\max}=20$ (SBMs) are sufficient: $\widehat k$ tracks the effective block count, and the GW estimation error is essentially flat in $k_{\max}$ once $k_{\max}\ge 2 k_{\rm opt}$ (\Cref{fig:continous_all,fig:sbm_sparsity}). %

Across all \totalsynthexp synthetic settings, the BCM solver returns near deterministic couplings at convergence (the fraction of nodes carrying mass on more than one block is almost zero in every run; \Cref{appendix:data_analysis}). \Cref{lem:almostmap,prop:optimality_gap} guarantee that any local minimum lies within $O(1/n)$ of a deterministic assignment in cost. Still, these results do not fully explain why gradient-based methods typically converge to deterministic solutions -- which can arguably be seen as a form of implicit regularization, with gradient descent converging to more structured solutions.

\section{Data analysis}\label{sec:data_analysis}

Synthetic experiments use binary graphs; real-data examples use weighted networks fitted via the expected adjacency matrix \cite{donier-meroz_graphon_2023a} (the method extends naturally to decorated graphons \cite{dufour_inference_2024a}).

\paragraph{Synthetic experiments.}
We consider \numgraphons{} graphons and \numsbms{} SBMs, running $20$ Monte Carlo replicates per (model, initialisation, $n$). \Cref{fig:synth_main} shows the decaying error rate for $n\in\{100,\dots,10'000\}$ on a periodic graphon -- well beyond the $n\!\lesssim\!10^3$ scale at which graphon-estimation methods are typically benchmarked \cite{chan_consistent_2014a,zhang_estimating_2017a,chatterjee_matrix_2015a,xu_learning_2021b} -- matching the theoretical rate of \Cref{subsec:learning}; analyses for the remaining models are in \Cref{subsec:sims}. %
\begin{figure}[h!]
    \centering
    \begin{minipage}{0.49\linewidth}
        \centering
        \includegraphics[width=\linewidth]{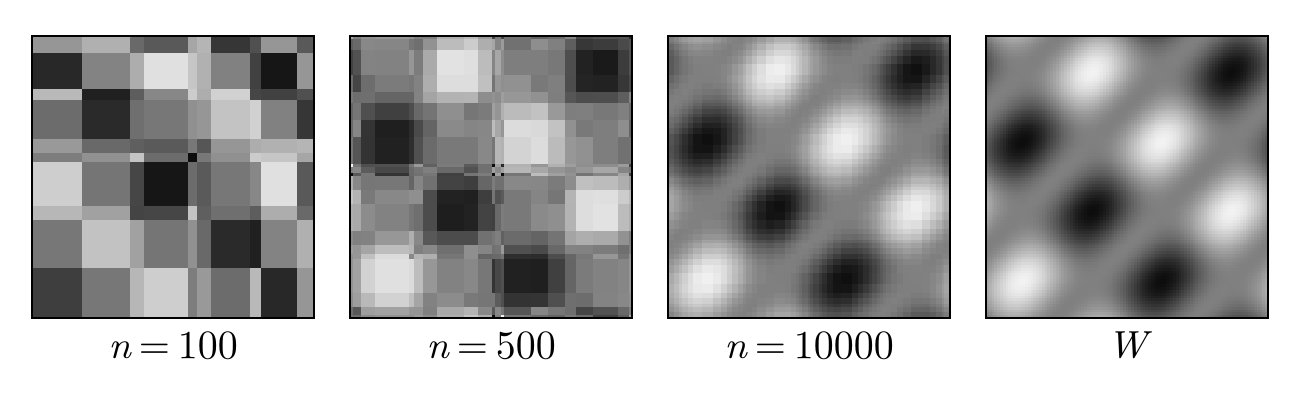}
    \end{minipage}
    \hfill
    \begin{minipage}{0.49\linewidth}
        \centering
        \includegraphics[width=\linewidth]{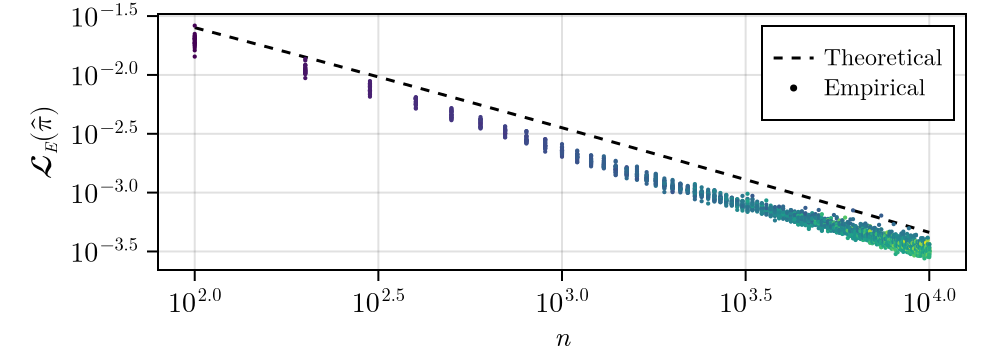}
    \end{minipage}
    \caption{Left: estimator for increasing network size in the 'G4' example. Right: empirical loss, matching theoretical rates. Obtained from $20$ Monte Carlo runs with clustering-based initialization. Similar and additional plots for the other considered models are found in \Cref{subsec:sims}.}
    \label{fig:synth_main}
    \vspace{-2em}
\end{figure}
\begin{table}[h]
\caption{GW distance to truth ($\times 10^{-3}$) at $n = 2000$, mean$_{\pm\text{std}}$ over 20 runs. \textbf{Bold}: best per setting; \emph{italic}: second best. \textbf{Ours} uses the best-performing initialization per setting, and perform uniformly well across all settings. Additional details can be found in \Cref{appendix:data_analysis}, including timing results.\newline}
\label{tab:benchmark}
\centering
\renewcommand{\arraystretch}{1.25}
\setlength{\tabcolsep}{3pt}
\resizebox{\textwidth}{!}{%
\begin{tabular}{lcccc|cccccc}
\toprule
 & \multicolumn{4}{c|}{SBMs} & \multicolumn{6}{c}{Graphons} \\
\cmidrule(lr){2-5}\cmidrule(l){6-11}
Method & S1 & S2 & S3 & S4 & G1 & G2 & G3 & G4 & G5 & G6 \\
\midrule
LG            & $88.4_{\pm 2.9}$              & $88.6_{\pm 2.6}$              & $71.9_{\pm 17}$               & $40.7_{\pm 14}$               & $2.8_{\pm 0.66}$              & $2.2_{\pm 0.39}$              & $1.3_{\pm 0.12}$              & $42.6_{\pm 1.1}$              & $42.8_{\pm 1.2}$              & $55.7_{\pm 9.8}$ \\
NBDsmooth     & $0.82_{\pm 0.007}$            & $0.82_{\pm 0.007}$            & $0.82_{\pm 0.004}$            & $0.81_{\pm 0.010}$            & $0.70_{\pm 0.012}$            & $1.0_{\pm 0.009}$             & $1.2_{\pm 0.011}$             & $\emph{0.88}_{\pm 0.007}$ & $\mathbf{0.91}_{\pm 0.010}$  & $0.93_{\pm 0.015}$ \\
SAS           & $81.3_{\pm 11}$               & $82.2_{\pm 9.8}$              & $0.97_{\pm 0.19}$             & $5.8_{\pm 3.5}$               & $\emph{0.46}_{\pm 0.011}$ & $\mathbf{0.62}_{\pm 0.009}$  & $1.0_{\pm 0.019}$             & $42.0_{\pm 1.1}$              & $41.9_{\pm 1.1}$              & $54.3_{\pm 11}$ \\
USVT          & $\emph{0.32}_{\pm 0.007}$ & $\emph{0.32}_{\pm 0.007}$ & $\emph{0.32}_{\pm 0.004}$ & $\emph{0.45}_{\pm 0.010}$ & $\mathbf{0.18}_{\pm 0.005}$   & $1.7_{\pm 0.07}$              & $\mathbf{0.73}_{\pm 0.016}$  & $1.3_{\pm 0.018}$             & $2.7_{\pm 0.59}$              & $\mathbf{0.34}_{\pm 0.008}$ \\
\textbf{Ours} & $\mathbf{0.013}_{\pm 0.0003}$ & $\mathbf{0.31}_{\pm 0.0003}$  & $\mathbf{0.31}_{\pm 0.0004}$  & $\mathbf{0.24}_{\pm 0.001}$   & $0.61_{\pm 0.05}$              & $\emph{0.80}_{\pm 0.06}$ & $\emph{0.81}_{\pm 0.10}$ & $\mathbf{0.72}_{\pm 0.12}$    & $\emph{1.2}_{\pm 0.03}$  & $\emph{0.92}_{\pm 0.05}$ \\
\bottomrule
\end{tabular}%
}
\end{table}

\paragraph{Baseline comparison.}
\Cref{tab:benchmark} benchmarks our estimator at $n=2000$ against the standard SBM/graphon baselines with mature implementations that target the same recovery loss and scale to $n=2000$: Largest-Gap (LG, \cite{channarond_classification_2012a}), Neighbourhood Smoothing (NBDsmooth, \cite{zhang_estimating_2017a}), Sort-and-Smooth (SAS, \cite{chan_consistent_2014a}), and USVT \cite{chatterjee_matrix_2015a}. Our method delivers the strongest overall performance: it attains the smallest GW distance on the SBM settings (often by one to two orders of magnitude over every competitor), while ranking second on four of the five remaining smooth graphons (within a small constant factor of the leader) and being computationally efficient (see \cref{fig:timing}). Crucially, ours is the only estimator that performs uniformly well across both graphons \emph{and} block models, while being interpretable (in opposition to USVT): each baseline collapses on at least one regime. The histogram estimator \cite{olhede_network_2014b}-- which subsumes spectral-clustering with local refinement -- and the variational MLE \texttt{sparseBM} \cite{frisch_sparsebm_2022}  did not terminate within our computational budget; restricted comparisons for $n\leq 300$ are in \Cref{fig:benchmark}, and our method is order of magnitude faster. The real-data examples lack ground truth, so we evaluate qualitatively against domain priors (\Cref{appendix:data_analysis}).

\paragraph{EEG dataset.}
We apply our method to recordings from $64$ electrodes on $20$ subjects ($10$ alcoholic, $10$ control) sampled at $256$ Hz over $1$ second under three stimulus paradigms \cite{begleiter_eeg_1995}. Edge weights are Centered Kernel Alignment (CKA) between replicated electrode time series, with a linear kernel \cite{kornblith_similarity_2019}; we apply srGWB with $k=40$ blocks. \Cref{fig:eeg_heatmaps} displays the fitted co-activation structures and their mapping to brain regions; \Cref{fig:eeg_brains} shows the alcoholic-vs-control difference, with increased co-activation in alcoholic subjects between frontal and parietal/occipital regions, a pattern commonly interpreted as a compensatory mechanism \cite{chanraud2013remapping, cao2014eeg}. See \Cref{appendix:data_analysis} for further analysis.
\begin{figure}[h]
    \centering
    \begin{subfigure}{0.43\linewidth}
        \centering
        \includegraphics[width=\linewidth]{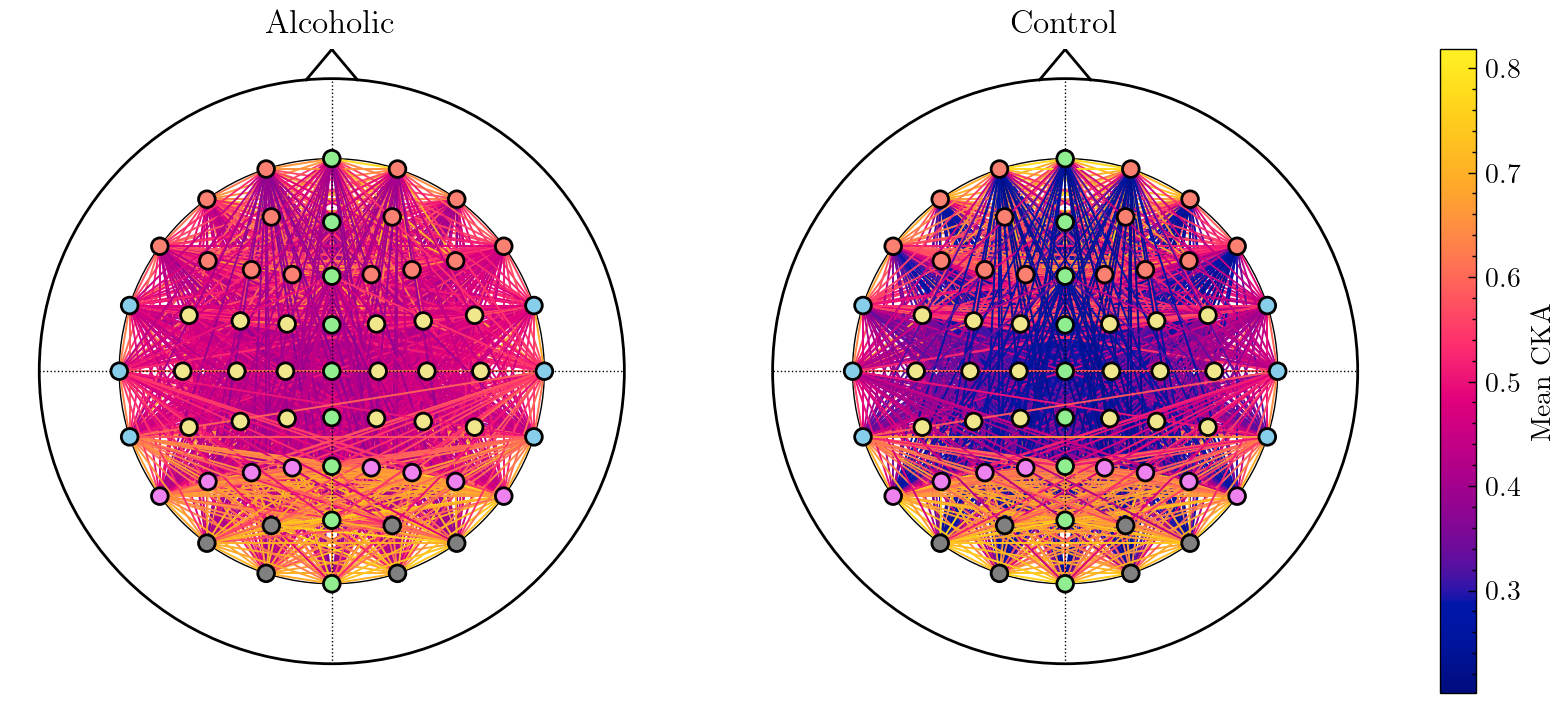}
        \caption{Inferred latent connectivity in alcoholic (left) and control (right).}
        \label{fig:eeg_brains}
    \end{subfigure}
    \hfill
    \begin{subfigure}{0.43\linewidth}
        \centering
         \includegraphics[width=\linewidth]{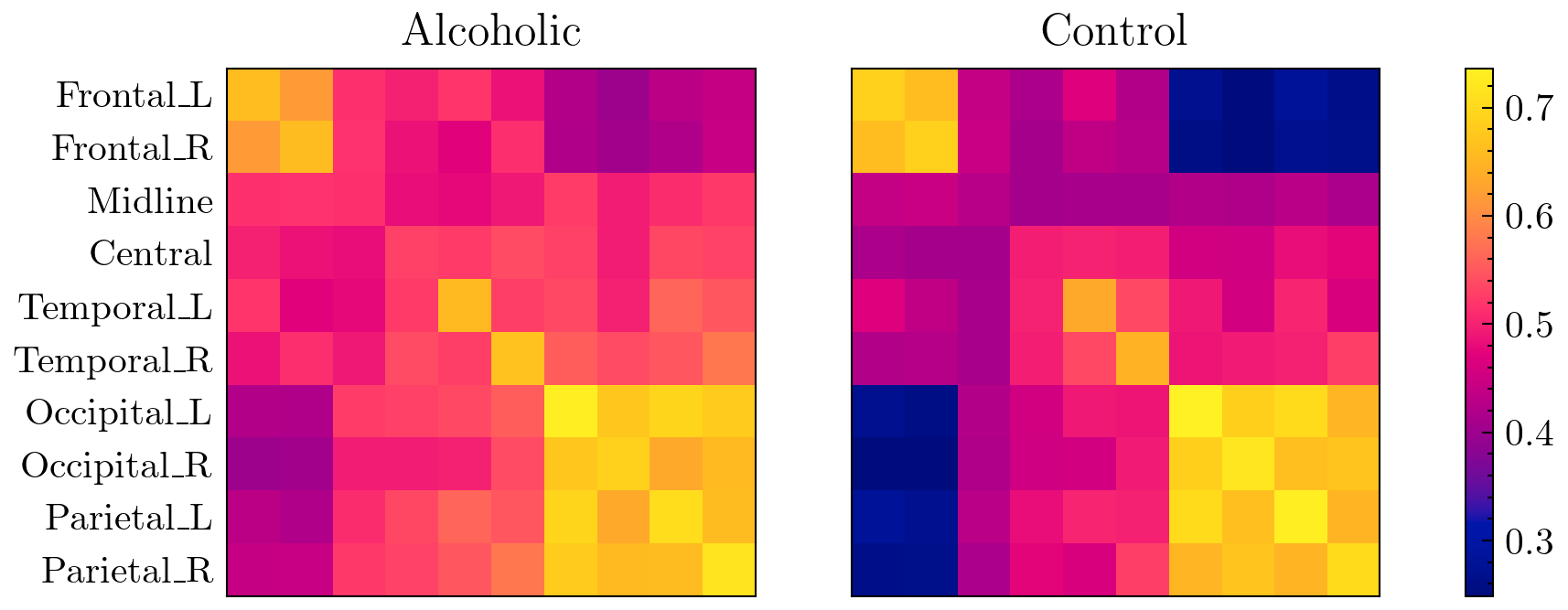}
        \caption{Mapped co-activation structures in alcoholic (left)  and control (right).  }
        \label{fig:eeg_heatmaps}
    \end{subfigure}
    \caption{EEG dataset: inferred block connectivity (left) and co-activation structures (right) for the first stimulus condition.}
    \vspace{-2em}
\end{figure}

\paragraph{OpenFlight Airport Data.}
We analyse the OpenFlight airport network \cite{openflights_airports}, with edges encoding the existence of direct flights, partitioning $3257$ airports into $23$ clusters. \Cref{fig:airports_map1} shows the inferred block-level connectivity; the full geographically-laid-out clustering is in \Cref{appendix:data_analysis} and reveals strong spatial coherence and major hubs.
\begin{figure}[h!]
    \centering
    \includegraphics[width=0.43\linewidth]{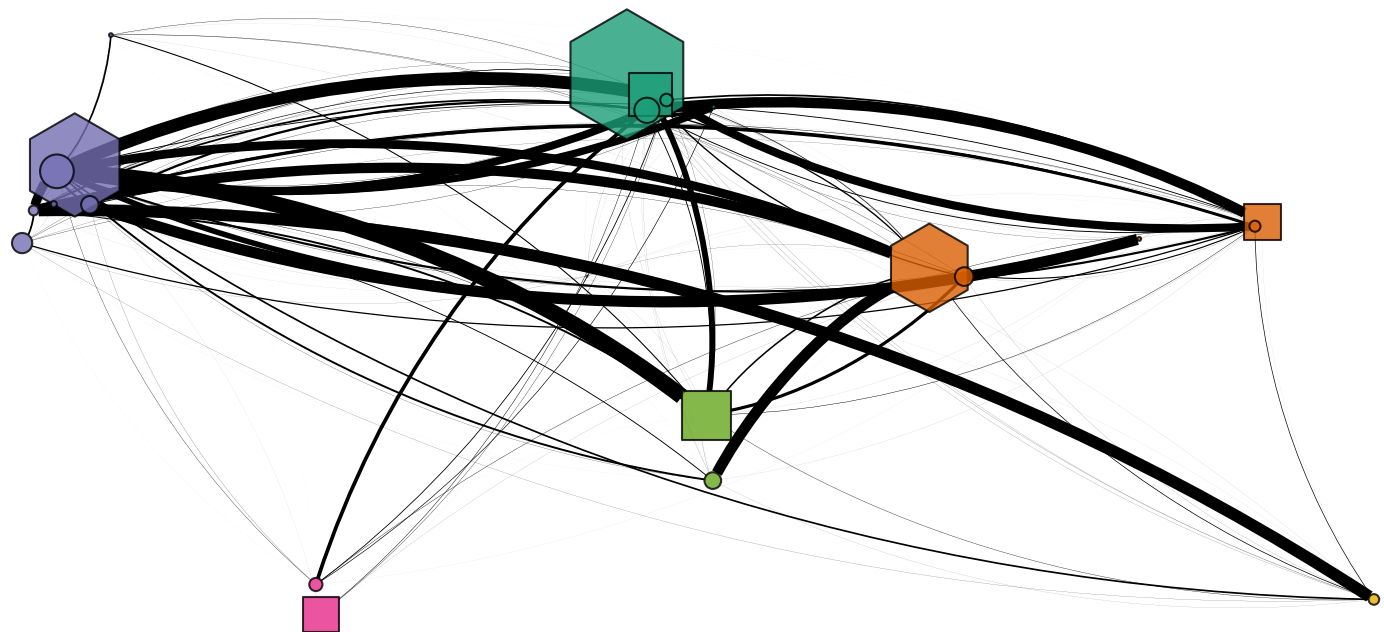}
    \includegraphics[width=0.43\linewidth]{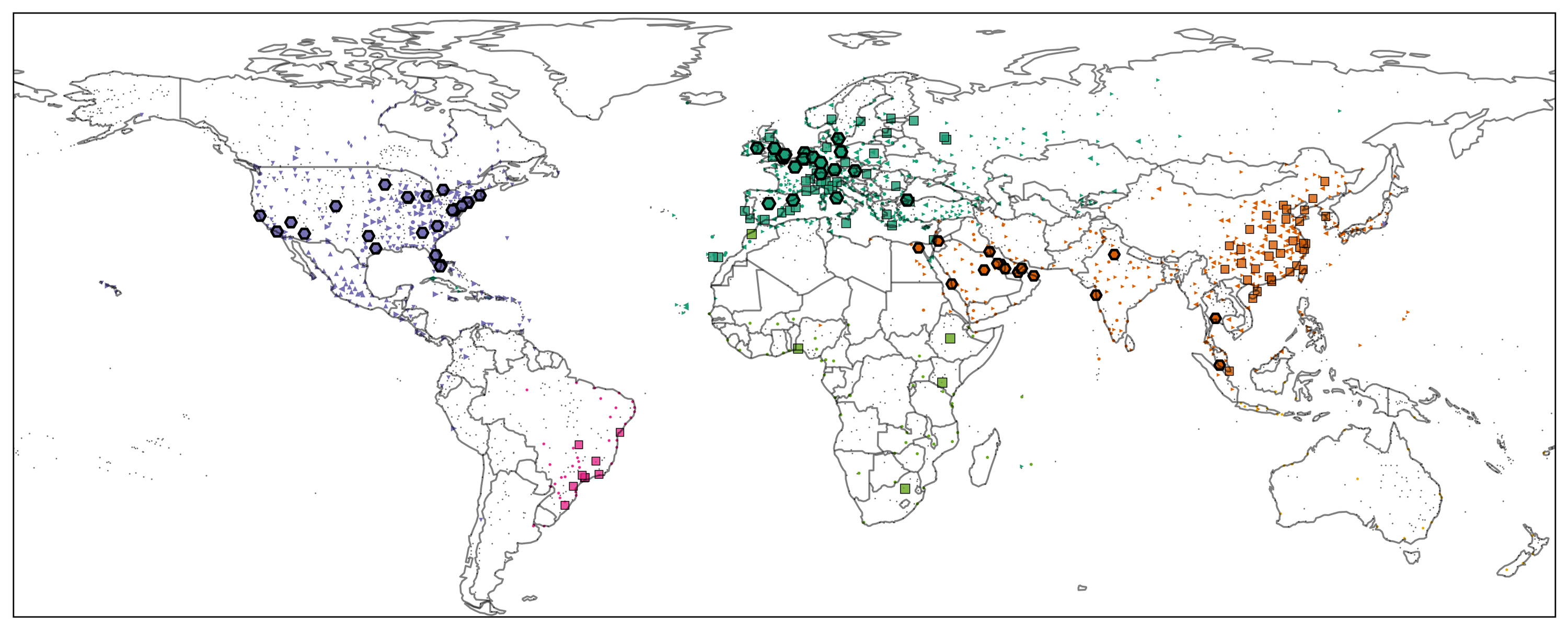}
    \caption{OpenFlight network: inferred block-level connectivity. Geographic map in \Cref{appendix:data_analysis}.}
    \label{fig:airports_map1}
\end{figure}

\section{Discussion}\label{sec:discussion}
We introduced a tractable framework for latent network inference via semi-relaxed Gromov–Wasserstein barycenters. Relaxing hard assignments to probabilistic couplings turns SBM/graphon estimation into a continuous optimization that avoids NP-hard combinatorial search, with a regime-agnostic optimality gap of order $\|B\|_\infty^2/n$. This yields minimax-optimal rates for SBMs and Hölder graphons in both dense \citep{gao_rateoptimal_2015a} and sparse \citep{klopp_oracle_2017b} regimes, via a scalable conditional gradient algorithm. Scope and natural extensions are discussed in \Cref{par:choice_k}. An important and promising extension would be to incorporate in the method and its analysis a penalty for the number of blocks selected by the algorithm. Such an extension would offer a novel framework for adaptively selecting the preferred complexity by which to summarize network data. However, the addition of a sparsity-promoting regularization to the semi-relaxed Gromov-Wasserstein problem \eqref{eq:SRGWB} significantly modifies the optimization landscape and requires a separate, careful analysis. Finally, while the numerical examples are mainly illustrative, we see the proposed framework as an important step toward more general nonparametric methods for network data, for example, for sieve bootstrap or data augmentation in networks with edge and node attributes.

\bibliographystyle{unsrtnat}
\bibliography{ref.bib,references_manual.bib}

\appendix

\section{Background}\label{appendix:background}

\subsection{Generative models for networks}\label{sec:genmodelsforgraphs}

Generative models for networks provide a probabilistic framework for describing graph-valued data. In this section we briefly review classical models relevant to our setting: namely, stochastic block models, graphons, and their extensions. We refer the readers to \citet{orbanz_bayesian_2015b} for details.

\paragraph{Stochastic block models (SBMs).}
The stochastic block model assumes that each node belongs to one of a finite number of latent groups, and that the probability of an edge between two nodes depends only on their group memberships \cite{holland_stochastic_1983a}. Formally, let $K \in \mathbb{N}$ denote the number of blocks, let $z_i \in \{1,\dots,K\}$ be the latent block assignment of node $i$, and let $B \in [0,1]^{K \times K}$ be a symmetric matrix of connection probabilities. Conditional on the assignments $(z_i)_{i=1}^{n}$, edges are generated independently according to
\[
\mathbb{P}(\Edges_{ij} = 1 \mid z_i, z_j) = B_{z_i,z_j}, \qquad 1 \le i < j \le n.
\]
SBMs capture community structures and provide a tractable model for clustering and inference. They admit many extensions, including degree-corrected SBMs, mixed-membership SBMs, and dynamic or hierarchical variants (see e.g., \cite{zhao_consistency_2012a}). 
Importantly, observe that SBMs correspond to graphons that are piecewise constant with respect to a finite partition of $[0,1]$. As such, they can be viewed as finite-dimensional approximations to more general graphon models.

\paragraph{Graphons.}
Graphons provide a nonparametric framework for modeling large graphs and serve as limit objects for sequences of graphs with growing size \cite{lovasz_large_2012a}. Formally, a graphon is defined as a measurable function $W : [0,1]^2 \to [0,1]$. Given $W$, a random graph with $n$ nodes can be generated by sampling latent variables $u_1,\dots,u_n \stackrel{\text{i.i.d.}}{\sim} \mathrm{Unif}[0,1]$ and then, conditionally on these variables, drawing edges independently as
\[
\mathbb{P}(\Edges_{ij} = 1 \mid u_i, u_j) = W(u_i,u_j), \qquad 1 \le i < j \le n.
\]
Graphons are identifiable only up to measure-preserving transformations of the unit interval. That is, if $\phi : [0,1] \to [0,1]$ is measure-preserving, then $W$ and $W^\phi(u,v) = W(\phi(u),\phi(v))$ define the same distribution over graphs \cite{janson_graphons_2011a}. This inherent non-identifiability motivates the use of comparison tools, such as the cut distance or Gromov--Wasserstein distance, that are invariant under such transformations.
A useful way to interpret graphons is as infinite-dimensional analogues of SBMs, where the discrete block assignments are replaced by continuous latent positions. Many statistical properties of graphons depend on regularity assumptions.

A graphon $W$ is said to be $(\alpha, L)$-Hölder continuous for some $\alpha \in (0,1]$ and $L > 0$ if
\[
|W(x,y) - W(x',y')| \le L\bigl(|x-x'| + |y-y'|\bigr)^\alpha \quad \text{for all } (x,y),(x',y') \in [0,1]^2.
\]
Continuity and Hölder smoothness captures the idea that nodes with close latent positions should have somewhat close connectivity patterns. This assumption enables approximation of smooth graphons by block models with a controlled number of blocks and underlies convergence rates for graphon estimation \cite{klopp_oracle_2017b}. In our context, smoothness assumptions help relate finite-sample graph observations to their underlying graphon representations.

\paragraph{Decorated graphs.}
Many real-world networks carry additional information beyond binary connectivity, such as node features, and edge attributes or weights. Decorated and probability graphons extend classical graphons by incorporating this supplementary information \cite{lovasz_limits_2010a,abraham_probabilitygraphons_2023a}. 
Formally, we consider a node feature space $\mathcal{X}$ and an edge space $\Espace$ and, for each node $i$ and $j$, sample $x_i \in \mathcal{X}$ and $\Edges_{ij} \in \Espace$. The distributions of the node features and edges are still independent conditionally on the latent variables;
for example, a decorated graphon model may specify a joint distribution
\[
(u_i, x_i) \sim \mu, \qquad \Edges_{ij} \mid u_i,u_j \sim W(u_i,u_j),
\]
where the node attributes $(x_i)$ are informative about the latent positions $(u_i)$ and $W(u_i,u_j)$ is a probability measure supported on $\Espace$.  
It is possible to define a distance for general networks encompassing all features previously described \cite{bauer_zgromovwasserstein_2025a}, and our results extend naturally to the case where we do not consider node features and $\Espace$ is compact and bounded as described in \cite{dufour_inference_2024a}.

\subsection{Gromov--Wasserstein distance}\label{appendix:gw}

\paragraph{Gromov--Wasserstein distance.}
Define a metric measure space as a triple $(X, d_X, \mu_X)$, where $(X,d_X)$ is a metric space and $\mu_X$ is a probability measure on $X$. Given two metric measure spaces $(X,d_X,\mu_X)$ and $(Y,d_Y,\mu_Y)$, the squared $2$-Gromov--Wasserstein distance is defined as
\[
\mathrm{GW}_2^2(\mu_X, \mu_Y)
=
\inf_{\pi \in \Pi(\mu_X,\mu_Y)}
\iint_{X^2 \times Y^2}
\bigl| d_X(x,x') - d_Y(y,y') \bigr|^2
\, \mathrm{d}\pi(x,y)\,
\mathrm{d}\pi(x',y'),
\]
where $\Pi(\mu_X,\mu_Y)$ denotes the set of couplings (joint probability measures) with marginals $\mu_X$ and $\mu_Y$.

Unlike classical optimal transport, the GW distance does not require the spaces $X$ and $Y$ to be embedded in a common ambient space. Instead, it aligns points so as to best preserve pairwise distances, making it invariant to isometries and, more generally, to measure-preserving re-labelings. We refer the reader to \citet{memoli_gromov_2011a} for details.

\paragraph{GW distance between graphs.}
In the context of graphs, nodes are typically equipped with the uniform probability measure, and the metric is induced by the adjacency matrix or edge weights. Let
\(
\Edges \in \Espace^{n \times n}
\)
and
\(
F \in \Espace^{m \times m}
\)
denote the adjacency matrices of two (possibly weighted) graphs. The GW distance between them can be written as
\[
\mathrm{GW}_2^2(\Edges, F)
=
\min_{\pi \in \Pi([n],[m])}
\sum_{i,i'=1}^n \sum_{j,j'=1}^m
\bigl| \Edges_{i,i'} - F_{j,j'} \bigr|^2
\, \pi_{i,j}\, \pi_{i',j'},
\]
where $\Pi([n],[m])$ denotes the set of couplings whose left and right marginals are the discrete uniform distributions on $\{1,\dots,n\}$ and $\{1,\dots,m\}$, respectively.
This formulation makes explicit that GW compares graphs up to node permutations, and relates GW to the Mean Squared Error (MSE) used in \citet{gao_rateoptimal_2015a}.
See \citet{peyre_gromovwasserstein_2016a, xu_gromovwasserstein_2019, vayer_fused_2020,bauer_zgromovwasserstein_2025a} for details.

\paragraph{Semi-relaxed Gromov--Wasserstein distance.}
In some cases, it may be desirable to relax one of the marginal constraints. This leads to the \emph{semi-relaxed Gromov--Wasserstein} distance \cite{vincent-cuaz_semirelaxed_2022}.
Given again metric measure spaces $(X,d_X,\mu_X)$ and $(Y,d_Y,\mu_Y)$, the semi-relaxed GW problem is given by  relaxing the marginal constraint on $Y$:
\[
\mathrm{srGW}_2^2(\mu_X, \mu_Y)
=
\inf_{\substack{\pi \ge 0 \\ (\pi)_X = \mu_X}}
\iint_{X^2 \times Y^2}
\bigl| d_X(x,x') - d_Y(y,y') \bigr|^2
\, \mathrm{d}\pi(x,y)\,
\mathrm{d}\pi(x',y'),
\]
where $(\pi)_X$ denotes the marginal of $\pi$ on $X$. 
In the discrete graph setting, this translates as:
\(
\Edges \in \Espace^{n \times n}
\)
and
\(
F \in \Espace^{m \times m},
\)
the semi-relaxed GW distance takes the form
\[
\mathrm{srGW}_2^2(\Edges, F)
=
\min_{\substack{\pi \in \mathbb{R}_+^{n \times m} \\
\sum_{j=1}^m \pi_{i,j} = \tfrac{1}{n}}}
\sum_{i,i'=1}^n \sum_{j,j'=1}^m
\bigl| \Edges_{i,i'} - F_{j,j'} \bigr|^2
\, \pi_{i,j}\, \pi_{i',j'}.
\]
The semi-relaxed formulation retains invariance to relabeling of the source graph while allowing greater flexibility in matching to the target. This relaxation is particularly useful when comparing finite graphs to graphons, or when the target space represents a reference structure rather than a fixed empirical distribution.

\paragraph{Semi-relaxed Gromov--Wasserstein barycenters.}
Let a collection of metric measure spaces $(X_\ell,d_\ell,\mu_\ell)_{\ell=1}^L$ and weights $(\lambda_\ell)_{\ell=1}^L$, $L\geq 1$, such that $\lambda_{\ell} \geq 0$ and $\sum_{\ell=1}^{L}\lambda_{\ell} = 1$. A srGW barycenter is defined as a space $(Y,d_Y,\mu_Y)$ minimizing the weighted sum of semi-relaxed GW discrepancies
\[
\sum_{\ell=1}^L \lambda_\ell \, \mathrm{srGW}_2^2(\mu_\ell,\mu_Y).
\]
In contrast to classical GW barycenters, the semi-relaxed formulation fixes the marginal on each space while allowing flexibility in the barycenter measure, which is particularly advantageous when aggregating graphs of varying sizes or when estimating a population-level structure \cite{vincent-cuaz_semirelaxed_2022}. 
In the graph or graphon setting, srGW barycenters can be interpreted as representative structures that best preserve the relational patterns of the input graphs up to measure-preserving transformations. This makes srGW barycenters well suited for tasks such as averaging networks, template learning, and estimating graphons.

\section{Proofs}\label{appendix:proofs}
\begin{proof}[Proof of Lemma~\ref{lem:almostmap}]
    We adapt the ideas introduced in \cite{murray_probabilistic_2025} to our context.
    Let $\tilde\pi$ be a local-minima of $\cL_{\Edges,B}$ 
    Denote 
    \begin{align*}
    \cJ_{\tilde \pi}: (i,a) \in [n] \times [k] \mapsto \sum_{j=1}^n \sum_{b=1}^k \|\Edges_{i,j}- B_{a,b}\|^2_2 \, \tilde\pi_{j,b}.
    \end{align*}
    Fix $i \in [n]$ and $a \in [k]$ and assume
    that
    $\tilde\pi_{i,a} > 0$.
    By contradiction, suppose $a \notin \argmin_{\circ \in [k]} \cJ_{\tilde \pi}(i, \circ)$.
    Let $a' \in \argmin_{\circ \in [k]} \cJ_{\tilde \pi}(i, \circ)$ and define the perturbation 
    $\psi = \frac{\epsilon}{n}\tilde \pi_{i,a}\left(\delta_{i,a'}-\delta_{i,a}\right)$ for some $\epsilon > 0$. Then, observe that:
    \begin{align*}
        \cL_{\Edges, B}(\tilde \pi + \psi) - \cL_{\Edges, B}(\tilde \pi)
        &=  
        2 {\sum_{j,j'=1}^n \sum_{d,d'=1}^k
        \|\Edges_{j,j'} - B_{d,d'}\|_2^2 \, \pi_{j,d}\psi_{j',d'}}
        \\ &\qquad+ {\sum_{j,j'=1}^n \sum_{d,d'=1}^k
        \|\Edges_{j,j'} - B_{d,d'}\|_2^2 \, \psi_{j,d}\psi_{j',d'}}.
    \end{align*}
    By definition of $\psi$, the first term reduces to $\frac{\epsilon}{n}(\cJ_{\tilde \pi} (i, a') - \cJ_{\tilde \pi} (i, a)) < 0$. The second term is quadratic in $\epsilon/n$: we can find $\epsilon>0$ small enough such that $\cL_{\Edges, B}(\tilde \pi + \psi) - \cL_{\Edges, B}(\tilde \pi) < 0$, which contradicts the minimality of $\tilde \pi$. This concludes the proof.
\end{proof}

\begin{proof}[Proof of \Cref{prop:optimality_gap}]
Suppose a (local) optimal embedding $\pi$ maps a single node $i$ in $E$ to multiple blocks in $B$. Hence, there exist $2 \leq k_i \leq k$ blocks $\omega(i):=\{a_{k_1},\ldots,a_{k_i}\}\in [k]$ such that $\pi_{i,a}>0$ for all $a \in \omega(i)$.
By Lemma~\ref{lem:almostmap}, necessarily it must be that for any $a \in \omega(i)$
\begin{equation}\label{eq:fooc}
a \in \argmin_{d \in [k]} \sum_{j=1}^n \sum_{b'=1}^k 
\|\Edges_{i,j}- B_{d,b'}\|^2_2
\, \pi_{j,b'}.
\end{equation}
Without loss of generality, assume that $\omega(i) = \{1,\ldots,k_i\}$ and that $\pi_{i,1} = \max_{a \in \omega(i)} \pi_{i,a}$.
We can construct a new coupling $\pi'$ from $\pi$ by transporting all the mass in $\left\{(i,b)\right\}_{b=2}^{k_i}$ to $(i,1)$, without violating our first-order condition:
\[
\pi' = \pi + \underbrace{\sum_{b=2}^{k_i}\pi_{i,b}\left(\delta_{i,1}-\delta_{i,b}\right)}_{\displaystyle:=\psi},
\]
with $\psi$ being a signed measure with zero total mass. We emphasize that $\pi'$ is still a valid probabilistic mapping:
\begin{enumerate}
    \item \textbf{(Nonnegativity):} For $(j,d)\notin\{(i,a),(i,b)\}$ we have $\pi'_{j,d}=\pi_{j,d}\ge0$. For $(j,d)=(i,a)$,
\(\pi'_{i,a}=\pi_{i,a}+ \sum_{b=2}^{k_i}\pi_{i,b}\ge0\). For $(j,d)=(i,b)$,
\(\pi'_{i,b}=\pi_{i,b}-\pi_{i,b}=0\). Hence,
\(\pi'_{j,d}\ge0\) for all \(j,d\).
\item \textbf{(Left marginal)}
By assumption \(\pi\) has left marginal uniform, i.e.
\(\sum_{d=1}^k\pi_{j,d}=1/n\) for every \(j\). For any fixed \(j\neq i\) we have \(\psi_{j,d}=0\)
for all \(d\), hence
\(\sum_{d}\pi'_{j,d}=\sum_{d}\pi_{j,d}=1/n\).
For \(j=i\),
\(
\sum_{d=1}^k \pi'_{i,d}
= \sum_{d}\pi_{i,d} + \sum_{d}\psi_{i,d}
= \frac{1}{n} + 0 = \frac{1}{n},
\) since \(\psi\) has zero total mass.
Therefore, the left marginal of \(\pi'\) is still uniform on \([n]\).
\item \textbf{(Total mass)} Since the left marginal is preserved for every \(j\), the total mass \(\sum_{j,d}\pi'_{j,d}\) remains
\(\sum_j (1/n)=1\), so \(\pi'\) is normalized as a coupling.
\end{enumerate}

Since $\cL_{\Edges,\tilde B}$ is quadratic in the coupling, we can split it into what is essentially a Taylor decomposition in the first and second-order variations of $\cL_{\Edges,\tilde B}$. Explicitly, one can see that:
\begin{align*}
    \cL_{\Edges,\tilde B}(\pi')
    &= \cL_{\Edges,\tilde B}(\pi + \psi ) \\
    &= \sum_{j,j'=1}^n \sum_{d,d'=1}^k
       \|\Edges_{j,j'} - B_{d,d'}\|_2^2 \, (\pi + \psi)_{j,d}\, (\pi + \psi)_{j',d'} \\
    &= \sum_{j,j'=1}^n \sum_{d,d'=1}^k
       \|\Edges_{j,j'} - B_{d,d'}\|_2^2 \, \pi_{j,d}\pi_{j',d'} 
        \\
    &\qquad\qquad + 2 \sum_{j,j'=1}^n \sum_{d,d'=1}^k
       \|\Edges_{j,j'} - B_{d,d'}\|_2^2 \, \pi_{j,d}\psi_{j',d'}
       + \sum_{j,j'=1}^n \sum_{d,d'=1}^k
       \|\Edges_{j,j'} - B_{d,d'}\|_2^2 \, \psi_{j,d}\psi_{j',d'} \\
    &= \cL_{\Edges,\tilde B}(\pi) 
       + 2 \underbrace{\sum_{j,j'=1}^n \sum_{d,d'=1}^k
       \|\Edges_{j,j'} - B_{d,d'}\|_2^2 \, \pi_{j,d}\psi_{j',d'}}_{\mathrm{(D1)}} 
       + \underbrace{\sum_{j,j'=1}^n \sum_{d,d'=1}^k
       \|\Edges_{j,j'} - B_{d,d'}\|_2^2 \, \psi_{j,d}\psi_{j',d'}}_{\mathrm{(D2)}}.
\end{align*}

We first show that $\mathrm{(D1)} = 0$. Since $\psi = \sum_{b=2}^{k_i}\pi_{i,b}\left(\delta_{i,1}-\delta_{i,b}\right)$ the sum collapses to:
\begin{align*}
    \mathrm{(D1)}
    &= \sum_{j,j'=1}^n \sum_{d,d'=1}^k
       \|\Edges_{j,j'} - B_{d,d'}\|_2^2 \, \pi_{j,d}\psi_{j',d'} 
    \\ & = \sum_{b=2}^{k_i}\pi_{i,b}\left(\sum_{j=1}^n \sum_{d=1}^k
       \|\Edges_{j,i} - B_{d,1}\|_2^2 \, \pi_{j,d}-\sum_{j=1}^n \sum_{d=1}^k
       \|\Edges_{j,i} - B_{d,b}\|_2^2 \, \pi_{j,d}\right)  
\end{align*}
and the last displayed equation is null by \eqref{eq:fooc}.

\smallskip

Next, we upper bound $\mathrm{(D2)}$.
\begin{align*}
    \mathrm{(D2)}
    &= \sum_{j,j'=1}^n \sum_{d,d'=1}^k
       \|\Edges_{j,j'} - B_{d,d'}\|_2^2 \, \psi_{j,d}\psi_{j',d'} 
    \\ & = 
    \sum_{b=2}^{k_i} \pi_{i,b}^2 \left(\|\Edges_{i,i} - B_{b,b}\|_2^2+\|\Edges_{i,i} - B_{1,1}\|_2^2\right)
    - 2 \sum_{2 \leq b < b' \leq k_i} \pi_{i,b}\pi_{i,b'} \|\Edges_{i,i} - B_{b,b'}\|_2^2 
    \\ & = 
    \sum_{b=2}^{k_i} \pi_{i,b}^2 \left(\| B_{b,b}\|_2^2+\|B_{1,1}\|_2^2\right)
    - 2 \sum_{2 \leq b < b' \leq k_i} \pi_{i,b}\pi_{i,b'} \|B_{b,b'}\|_2^2 \\
    & \leq
   \min_{b=2,\ldots,k_i} \left\{\pi_{i,b}^2\right\} \sum_{b=1}^{k_i} \left(\|B_{b,b}\|_2^2 + \|B_{1,1}\|_2^2 
     \right)
     \\ & \leq \min_{b=2,\ldots,k_i} \left\{\pi_{i,b}\right\}^2k_i 2\|B\|_\infty^2 \leq 2\left(\frac{1}{nk_i}\right)^2 k_i\|B\|_\infty^2 = \frac{2\|B\|_\infty^2}{n^2 k_i} \leq \frac{\|B\|_\infty^2}{n^2},
\end{align*}
where we used that there are no self-loops in $E$, $k_i\geq2$, and that $\sum_{d=1}^{k_i} \pi_{i,d} = \frac{1}{n}$.

\smallskip

Repeating this operation for all \(i \in [n]\) manifests a deterministic mapping with increased cost bounded by \(\|B\|_\infty^2/n\), hence proving the result.

\end{proof}

\begin{lemma}
    \label{lem:upper_bound_mappings}
    Let $\pihat$ be the optimal probabilistic estimator and $\operatorname{z}(\pihat)$ be the rounded deterministic labeling estimator, according to \cref{eq:rounding}.
    Consider the solution over deterministic couplings $\zols \in \argmin_{z \in \mathcal{Z}_{n,k}}\cL\left(\Edges, \bproj{z} \right)$.
    Then, we have:
    \begin{equation*}
        \cL_{\Edges}\left(\zols\right) \leq \cL_{\Edges}\left(\operatorname{z}(\pihat)\right) \leq \cL_{\Edges}\left(\zols\right) + O(1/n)
    \end{equation*}
\end{lemma}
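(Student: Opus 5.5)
The lower bound $\cL_{\Edges}(\zols) \le \cL_{\Edges}(\operatorname{z}(\pihat))$ is immediate: $\zols$ minimises $z\mapsto \cL_{\Edges}(\Edges,\bproj{z})$ over all of $\cZ_{n,k}$, and $\operatorname{z}(\pihat)$ is one element of $\cZ_{n,k}$. Here, and throughout, a deterministic $z$ is identified with the coupling $\pi^z_{i,a}=\frac1n\indicator{z(i)=a}\in\Pi_{[n],k}$. Under this identification $\cL_{\Edges}(z)=\min_B \cL_{\Edges}(\pi^z,B)=\cL_{\Edges}(\pi^z,\bproj{z})$, and the normalisation of the combinatorial loss \eqref{eq:combinatorial-problem} agrees with that of \eqref{eq:relaxed-problem}.

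For the upper bound I will chain three comparisons. Set $\hat B:=\bproj{\pihat}$ and $\tilde z:=\operatorname{z}(\pihat)$. First, since $\Pi_{[n],k}$ contains every $\pi^z$, global optimality of $\pihat$ for \eqref{eq:SRGWB} gives
\[
\cL_{\Edges}(\pihat)=\cL_{\Edges}(\pihat,\hat B)\le \cL_{\Edges}(\pi^{\zols},\bproj{\zols})=\cL_{\Edges}(\zols).
\]
Second, $\pihat$ is a global minimiser of the joint objective, so $\pihat$ is in particular a (local) minimiser of $\pi\mapsto\cL_{\Edges,\hat B}(\pi)$ with $B=\hat B$ fixed. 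Indeed, if some $\pi$ had $\cL_{\Edges}(\pi,\hat B)<\cL_{\Edges}(\pihat,\hat B)$, then $(\pi,\hat B)$ would beat the joint optimum. Hence \Cref{prop:optimality_gap} applies with $B=\hat B$ and gives
\[
\cL_{\Edges,\hat B}(\tilde z)\le \cL_{\Edges,\hat B}(\pihat)+n^{-1}\|\hat B\|_\infty^2\le \cL_{\Edges}(\pihat)+n^{-1},
\]
using $\hat B\in[0,1]^{k\times k}$, since it is a weighted average of entries of $\Edges\in\{0,1\}$. Third, re-optimising $B$ for the fixed labeling $\tilde z$ can only lower the cost:
\[
\cL_{\Edges}(\tilde z)=\cL_{\Edges,\bproj{\tilde z}}(\tilde z)\le \cL_{\Edges,\hat B}(\tilde z).
\]
Chaining the three displays yields $\cL_{\Edges}(\operatorname{z}(\pihat))\le \cL_{\Edges}(\zols)+n^{-1}$, which is the claimed $O(1/n)$.

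The main obstacle is making sure that the deterministic labeling produced in \Cref{prop:optimality_gap} really is $\operatorname{z}(\pihat)$ and that its bound holds uniformly over all nodes. In that proof each split node is collapsed onto its heaviest tied block, which matches the rounding rule \eqref{eq:rounding} once ties are broken by smallest index. However, after one node is collapsed the first-order condition \eqref{eq:fooc} for the remaining nodes is evaluated at a modified coupling, so the per-node costs $\le \|B\|_\infty^2/n^2$ do not obviously add up. To handle this I will track the total perturbation $\Psi=\sum_i\psi^{(i)}$ and expand $\cL(\pihat+\Psi)$ once around $\pihat$. The linear term vanishes because each $\psi^{(i)}$ moves mass only between blocks in the argmin set at $\pihat$. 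The quadratic term contains cross terms $\psi^{(i)}_{\cdot}\psi^{(j)}_{\cdot}$ for $i\neq j$. Each $\psi^{(i)}$ has zero total mass and entries of size at most $1/n$, so I will bound these cross terms by writing $\|\Edges_{ij}-B_{ab}\|^2=\Edges_{ij}-2\Edges_{ij}B_{ab}+B_{ab}^2$ and checking that only the $B$-dependent parts survive against zero-mass perturbations, giving a bound of order $\|B\|_\infty^2$ times $\sum_{i,j}|\psi^{(i)}||\psi^{(j)}|$. If this total is only $O(1)$ rather than $O(1/n)$, I will fall back on sequential collapsing: after each step, re-verify the first-order condition at the updated coupling using the fact that the updated coupling remains a local minimiser of a neighbouring restricted problem. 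The conclusion I aim for in either case is an $O(1/n)$ bound.
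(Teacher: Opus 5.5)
\paragraph{Verdict.} Your three-step chain is exactly the paper's proof of this lemma. You also make explicit why \Cref{prop:optimality_gap} may be invoked with $B=\hat B=\bproj{\pihat}$, which the paper leaves implicit. So everything hinges on that proposition holding for the specific labeling $\operatorname{z}(\pihat)$. The obstacle you flag at the end is a genuine gap, and neither of your repairs closes it.

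\paragraph{Why the repairs fail.} The gap is the unjustified sentence ``repeating this operation for all $i$'' in the paper's proof of \Cref{prop:optimality_gap}. In your one-shot expansion, the cross terms $\sum_{i\neq i'}\sum_{a,b}(\Edges_{ii'}-B_{ab})^2\psi^{(i)}_a\psi^{(i')}_b$ can be of order $m^2/n^2$ when $m$ nodes are split. That is $\Theta(1)$ for $m\asymp n$, and nothing forces them to cancel. In your sequential scheme, each collapse shifts $\cJ_{\pi}(i',\cdot)$ at every other node by $O(1/n)$, so ties break. The next move to the heaviest block can then carry a strictly positive linear term. The partially rounded coupling has no reason to satisfy \eqref{eq:fooc} at the remaining nodes, and no ``neighbouring restricted problem'' supplies this.

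\paragraph{Counterexample.} The proposition is in fact false for the argmax rounding. Take $\Edges=0$, $k=2$, $B=I_2$, so $\cL_{\Edges,B}(\pi)=\nu_1^2+\nu_2^2$, which is minimised by every $\pi$ with $\nu=(1/2,1/2)$. For $6\mid n$, give $5n/6$ nodes the row $(\tfrac{3}{5n},\tfrac{2}{5n})$ and the remaining $n/6$ nodes the row $(0,\tfrac1n)$. This $\tilde\pi$ is a global minimiser. Yet $\operatorname{z}(\tilde\pi)$ has block weights $(5/6,1/6)$ and cost $13/18$, a gap of $2/9\gg n^{-1}\|B\|_\infty^2$. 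So your argument, like the paper's, does not establish the lemma for $\operatorname{z}(\pihat)$.

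The literal statement also appears to fail at joint optima. Take two disjoint cliques plus a set of mutually non-adjacent nodes joined to every clique node, with $k=2$. With the cliques in separate blocks, the objective sees the extra nodes only through their total mass in block $1$. For suitable proportions, optimal couplings can therefore split them in a way that argmax rounding unbalances at $\Theta(1)$ cost.

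\paragraph{What does work.} Change the rounding. Draw $z_i\sim p_i:=n\,\pihat_{i,\cdot}$ independently, let $\pi^z$ be the induced deterministic coupling, and set $\Psi=\pi^{z}-\pihat$. Then
\[
\cL_{\Edges,\hat B}(\pi^z)=\cL_{\Edges,\hat B}(\pihat)+2\sum_{i,a}\Psi_{i,a}\,\cJ_{\pihat}(i,a)+\sum_{i,i'}\sum_{a,b}(\Edges_{ii'}-\hat B_{ab})^2\,\Psi_{i,a}\Psi_{i',b}.
\]
The rows of $\Psi$ are independent with mean zero. Hence the linear term and all terms with $i\neq i'$ vanish in expectation, with no stationarity needed. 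Using $\Edges_{ii}=0$, the $i=i'$ term has expectation
\[
n^{-2}\Bigl(\sum_a\hat B_{aa}^2p_{i,a}-\sum_{a,b}\hat B_{ab}^2p_{i,a}p_{i,b}\Bigr)\le n^{-2}\|\hat B\|_\infty^2 .
\]
Summing over $i$ gives $\mathbb{E}\,\cL_{\Edges,\hat B}(\pi^z)\le\cL_{\Edges,\hat B}(\pihat)+\|\hat B\|_\infty^2/n$, so some labeling $\tilde z$ attains this bound. Such a $\tilde z$ can be computed by the method of conditional expectations. Alternatively, up to a factor $2$, assign each node in turn to a current minimiser of $\cJ$, which makes every linear term nonpositive.

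Your chain then yields $\cL_{\Edges}(\tilde z)\le\cL_{\Edges}(\zols)+1/n$. That is all \Cref{thm:estimation_1} needs, but it requires replacing $\operatorname{z}(\pihat)$ in the estimator by such a $\tilde z$.
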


\begin{proof}[Proof of \Cref{lem:upper_bound_mappings}]
    From the definition of $\zols$, and since $\operatorname{z}(\pihat)$ is a feasible point for the optimization problem defining $\zols$, we have
    $$
        \cL_{\Edges}\left(\zols\right) \leq \cL_{\Edges}\left(\operatorname{z}(\pihat)\right).
    $$
    Now by definition of $\hat{\pi}$, we have 
    $$        \cL_{\Edges}\left(\hat{\pi} \right) \leq \cL_{\Edges}\left(\zols \right).$$
    Finally, using the definition of $\operatorname{z}(\pihat)$ and the optimality gap in \Cref{prop:optimality_gap}, we have
    $$
        \cL_{\Edges}\left(\operatorname{z}(\pihat)\right) \leq \cL_{\Edges,\bproj{\hat\pi}}\left(\operatorname{z}(\pihat)\right) \leq \cL_{\Edges}\left(\hat{\pi}\right) + D_n \leq \cL_{\Edges}\left(\zols\right) + D_n.
    $$ 
    where \(D_n= \|\bproj{\hat{\pi}}\|_\infty^2/n\).
\end{proof}

\begin{proof}[Proof of \Cref{thm:estimation_1}, SBM]
    We define
    \begin{equation*}
        \thetahat_{ij} = \Bhat_{\zhat(i), \zhat(j)},\, \text{and} \quad \thetatrue_{ij} = \Btrue_{\ztrue(i), \ztrue(j)}\footnote{Note that this is not $B(\ztrue)$}.
    \end{equation*}
    
    It is easy to see that $\cL_{\Edges}(\zhat) = \frac{1}{n^2}\sum_{i,j} \left\|\Edges_{ij} - \thetahat_{ij}\right\|^2$, and that $\cL_{\Edges}(\ztrue) \leq \frac{1}{n^2}\sum_{i,j} \left\|\Edges_{ij} - \thetatrue_{ij}\right\|^2 = \cL_{E,\Btrue}(\ztrue)$. Using \Cref{lem:upper_bound_mappings}, and the fact that $\zols$ is optimal, we have 
    \begin{equation*}
        \cL_{\Edges}(\zhat) \leq \cL_{\Edges}(\zols) + D_n \leq \cL_{\Edges}(\ztrue) + D_n \leq \cL_{E,B^*}(\ztrue) + D_n,
    \end{equation*}
    or equivalently,
    \begin{equation}
        \label{eq:inequality_gap_mse}
        \sum_{i,j} \left\|\Edges_{ij} - \thetahat_{ij}\right\|^2 \leq \sum_{i,j} \left\|\Edges_{ij} - \thetatrue_{ij}\right\|^2 + n^2 D_n.
    \end{equation}

    From there, letting $ \|\Edges-\thetahat\|^2 = \sum_{i,j} \left\|\Edges_{ij} - \thetahat_{ij}\right\|^2 $, we use the sesquilinearity of the inner product to get 
    \begin{equation}
        \label{eq:sesquilinearity}
    \|\Edges-\thetahat\|^2 = \|\thetahat - \thetatrue\|^2 + 2\langle \thetahat - \thetatrue, \thetatrue - \Edges \rangle + \|\Edges - \thetatrue\|^2,
    \end{equation}
    which we can combine with \cref{eq:inequality_gap_mse} to obtain
    \begin{equation*}
        \|\thetahat - \thetatrue\|^2 + 2\langle \thetahat - \thetatrue, \thetatrue - \Edges \rangle + \|\Edges - \thetatrue\|^2 \leq \|\Edges - \thetatrue\|^2 + n^2 D_n,
    \end{equation*}
    and thus
    \begin{equation}
        \label{eq:cross_product_term}
        \frac{1}{n^2}\sum_{i,j}\left\|\thetahat_{ij} - \thetatrue_{ij} \right\|^2 \leq \frac{2}{n^2} \left\langle \thetatrue - \thetahat, \Edges - \thetatrue  \right\rangle + D_n.
    \end{equation}

    We bound the cross product term as in \citet{gao_rateoptimal_2015a}\footnote{None of the concentration inequalities use explicitly the optimality of $\zols$.}, to obtain that for any constant $C'>0$, there exists a constant $\tilde{C}>0$ only depending on $C'$ such that
    $$
         \sum_{i,j}\left\|\thetahat_{ij} - \thetatrue_{ij} \right\|^2  \leq 2\tilde{C}\sqrt{n\log k} \sqrt{\sum_{i,j}\left\|\thetahat_{ij} - \thetatrue_{ij} \right\|^2}  + 4 \tilde{C}^2\sqrt{k^2 + n\log k}\sqrt{n\log k} + D_nn^2,
    $$
    with probability at least $1-\exp(-C'n\log k)$. Solving the above quadratic inequality, we get the following inequality:
    \begin{equation}
    \label{eq:quadratic_form_rate_sbm}        
     \sqrt{\sum_{i,j}\left\|\thetahat_{ij} - \thetatrue_{ij} \right\|^2} \leq \frac{1}{2}\left(2\tilde{C}\sqrt{n \log k} + \sqrt{\left(20\tilde{C}^2\sqrt{k^2+n\log k }\sqrt{n\log k}\right) + 4D_nn^2}\right).
    \end{equation}
    Since from \Cref{prop:optimality_gap} $D_nn^2 = n$, for some constant $C>0$ only depending on $\tilde{C}$ we have
    $$
        \sum_{i,j}\left\|\thetahat_{ij} - \thetatrue_{ij} \right\|^2 \leq C \left(k^2 + n\log k\right) ,
    $$
    which implies \cref{eq:quadratic_form_rate_sbm} with probability at least $1-\exp(-C'n\log k)$, concluding the proof for the SBM case.

    In the graphon case, using a similar argument, we obtain that for any constant $C'>0$, there exists a constant $C_1>0$ only depending on $C'$ such that
    $$
    \sum_{i,j}\left\|\thetahat_{ij} - \thetatrue_{ij} \right\|^2 \leq C_1\left(n^2\left(\frac{1}{k^2}\right)^{\alpha \wedge 1} + n^2D_n + k^2 + n\log(k)\right),
    $$
    with probability at least $1-\exp(-C'n\log k)$. Since $D_nn^2 = C_2 n$ for some $C_2 > 0$, setting $k=n^\beta$ and minimizing the right hand side over $\beta \in (0,1)$  yields:
    $$  
    \sum_{i,j}\left\|\thetahat_{ij} - \thetatrue_{ij} \right\|^2 \leq C\left(n^2 \cdot n^{-\frac{2\alpha}{\alpha+1}}+n{\log n}\right),
    $$
    for some constant $C>0$. This concludes the proof.
\end{proof}

\begin{proof}[Proof of \Cref{cor:sparse}]
    The proof follows the same idea as the proof of \Cref{thm:estimation_1}, but applying the concentration bounds found in \citet[Section 4]{klopp_oracle_2017b}. More precisely, equation (24) of \citet{klopp_oracle_2017b}, where the optimality of the least-squares estimator is used, is transformed to incorporate our optimality gap $D_n$.

    Let $\thetatrue \in [0,1]^{n\times n}$ be the matrix with entries $\thetatrue_{ij}=\rho_n W^*(\xi_i,\xi_j)$ for $i\neq j$ and $\thetatrue_{ii}=0$, and write $E := \Edges - \thetatrue$ for the (centered) noise matrix. Let $\thetahat_{ij} = \Bhat_{\zhat(i), \zhat(j)}$ be our estimator and $\thetaols_{ij} = \Bols_{\zols(i), \zols(j)}$ the unrestricted block-constant least-squares oracle (i.e.\ $\zols \in \argmin_{z\in\cZ_{n,k}}\cL_\Edges(\bproj{z})$). For any $z \in \cZ_{n,k}$ and any symmetric $Q \in \mathbb{R}^{k\times k}_{\rm sym}$, let $\Theta_{z,Q}$ denote the symmetric matrix with off-diagonal entries $Q_{z(i),z(j)}$, and define the family
    \[
       \cT[k] := \bigl\{\Theta_{z,Q}\,:\, z\in \cZ_{n,k},\, Q \in \mathbb{R}^{k\times k}_{\rm sym}\bigr\},
    \]
    of $k$-class block-constant matrices. Let $\Theta_*\in\cT[k]$ be the best Frobenius-norm approximation of $\thetatrue$ in $\cT[k]$.

    From \Cref{lem:upper_bound_mappings} and the optimality of $\zols$, for any $\Theta\in \cT[k]$,
    \[
        \bigl\|\Edges - \thetahat\bigr\|_F^2 \;=\; n^2\,\cL_\Edges(\zhat) \;\leq\; n^2\,\cL_\Edges(\zols) + n^2 D_n \;\leq\; \bigl\|\Edges - \Theta\bigr\|_F^2 + n^2 D_n.
    \]
    Expanding the squared Frobenius norm (similarly to \Cref{eq:sesquilinearity}) yields the analogue of \citet[eq.~(24)]{klopp_oracle_2017b} with our additive optimality gap $D_n$:
    \begin{equation}\label{eq:basic_inequality_klopp_with_Dn}
        \bigl\|\thetahat - \thetatrue\bigr\|_F^2 \;\leq\; \bigl\|\thetatrue - \Theta_*\bigr\|_F^2 + 2\bigl\langle \thetahat - \thetatrue,\, E\bigr\rangle + 2\bigl\langle \thetatrue - \Theta_*,\, E\bigr\rangle + n^2 D_n.
    \end{equation}

    From there, we can conclude exactly as in \citet{klopp_oracle_2017b}.

\end{proof}

\begin{remark}
    We note that in the sparse case, we have an unrestricted least square estimator, i.e.\ we do not constraint the size of the partitions nor the maximum value of the block matrix. \citet{klopp_oracle_2017b} also showed more results for constrained least square estimators. An interesting question is whether we could get such an estimator using a penalized version of the srGWB, where we add a penalty term to the objective that penalizes small partitions. We leave this question for future work.
\end{remark}

\section{Algorithmic details}\label{appendix:algorithmic}
\begin{algorithm}[h]
\caption{Conditional Gradient Solver for srGWB}
\label{alg:SRGWB_pot}
\begin{algorithmic}[1]

\STATE \textbf{Input:} coupling $\pi^{(0)} \in \Pi_{[n],k}$, matrix $B \in [0,1]^{k\times k}$

\FOR{$t = 0,1,2,\dots$ until convergence}

    \STATE 
    \[
    G^{(t)} \leftarrow \nabla_{\pi}\, \mathcal{L}_{\Edges,B}(\pi^{(t)}).
    \]

    \STATE 
    \[
    \psi^{(t)} \leftarrow
    \operatorname*{argmin}_{\psi \in \Pi_{[n],k}}
    \langle \psi, G^{(t)} \rangle .
    \]

    \STATE Choose step size $\gamma^{(t)} \in [0,1]$ by exact line search.

    \STATE 
    \[
    \pi^{(t+1)}
    \leftarrow
    (1-\gamma^{(t)})\pi^{(t)} + \gamma^{(t)} \psi^{(t)}.
    \]

\ENDFOR
\STATE \textbf{Output:} $\pi$
\end{algorithmic}
\end{algorithm}

\subsection{On implicit regularization toward deterministic couplings}\label{appendix:implicit_reg}
\citet{assel2025distributional} observed that srGW implementations on networks often converge to \emph{deterministic} solutions, even under entropic regularization (e.g., Sinkhorn) which is designed to promote diffuse couplings. This can be partly explained by the fact that deterministic assignments are extreme points of the coupling polytope. We formalize this intuition with two results: (i) deterministic mappings always exist that are stationary points of \eqref{eq:relaxed-problem} (\Cref{lem:almostmap}), and (ii) the best deterministic solution incurs at most an $O(1/n)$ loss relative to the optimal probabilistic coupling (\Cref{prop:optimality_gap}). Still, these results do not fully explain why gradient-based methods typically converge to deterministic solutions -- which can arguably be seen as a form of \emph{implicit regularization}, with gradient descent converging to more structured solutions. Empirically, the BCM solver returns near-fully-deterministic couplings at convergence: across our \totalsynthexp synthetic settings, the fraction of nodes with non-trivial probabilistic mass on more than one block is typically zero (see \Cref{appendix:data_analysis}).

\section{Additional numerical experiments}\label{appendix:data_analysis}

\subsection{Simulated data}\label{subsec:sims}
We consider synthetic experiments where graphs are generated from an unknown stochastic block model or continuous graphon. Given an observed network $\Edges$ on $n$ nodes, our goal is to recover a low-dimensional block structure that approximates the underlying graphon using 
the semi-relaxed Gromov-Wasserstein barycenter.
In our experiments, we consider the \totalsynthexp different latent, unknown network structures, specifically \numgraphons continuous graphons with varying smoothness properties, and \numsbms are SBMs, as defined in \Cref{table:ourexp}.

\begin{table}[h!]
\centering
\footnotesize 
\renewcommand{\arraystretch}{1.2} 

\begin{minipage}[t]{0.48\textwidth}
\centering
\begin{tabular}{cl}
\toprule
\textit{ID} & \textit{Graphon Definition} $W(u,v)$ \\ 
\midrule
1. & \( W(u, v) = u \cdot v \) \\ \addlinespace[0.8em]
2. & \( W(u, v) = 0.5(u + v) \) \\ \addlinespace[0.8em]
3. & \( W(u, v) = \exp\left(-\max(u, v)^{0.75}\right) \) \\\addlinespace[0.8em]
4. & \( W(u, v) = |u - v| \) \\\addlinespace[0.8em]
5. & \( W(u, v) = 1 - |u - v| \) \\\addlinespace[0.8em]
6. & \( W(u, v) = 0.5 + 0.5 \cdot \sin(2\pi u) \cdot \sin(2\pi v) \) \\\addlinespace[0.8em]
\bottomrule
\end{tabular}
\end{minipage}
\hfill %
\begin{minipage}[t]{0.48\textwidth}
\centering
\begin{tabular}{ccc}
\toprule
\textit{ID} & $\nu$ & $B$ \\ 
\midrule
1. & \( \begin{bmatrix} 0.5 & 0.5 \end{bmatrix} \) & \( \begin{bmatrix} 0.2 & 0.8 \\ 0.8 & 0.2 \end{bmatrix} \) \\ \addlinespace[1.5em]
2. & \( \begin{bmatrix} 0.5 & 0.5 \end{bmatrix} \) & \( \begin{bmatrix} 0.8 & 0.2 \\ 0.2 & 0.8 \end{bmatrix} \) \\ \addlinespace[1.5em]
3. & \( \begin{bmatrix} 0.3 & 0.7 \end{bmatrix} \) & \( \begin{bmatrix} 0.8 & 0.2 \\ 0.2 & 0.8 \end{bmatrix} \) \\ \addlinespace[1.5em]
4. & \( \begin{bmatrix} 0.2 & 0.5 & 0.3 \end{bmatrix} \) & \( \begin{bmatrix} 0.8 & 0.1 & 0.2 \\ 0.1 & 0.7 & 0.3 \\ 0.2 & 0.3 & 0.5 \end{bmatrix} \) \\ 
\bottomrule
\end{tabular}
\end{minipage}

\caption{Ground truths considered in our experiments: graphons (left) and SBMs (right).}
\label{table:ourexp}
\end{table}

For each ground-truth structure, we sample graphs of increasing size $n = 200,300,\ldots, 10000$ nodes. For each sampled graph, we apply our pipeline to recover a block structure with at most $k$ blocks which seeks to minimize the semi-relaxed GW objective to the observed adjacency matrix. For the continuous graphons, we set  $k \leq \lceil \sqrt{n} \rceil$ blocks and $k \leq 20$ for the SBMs. This effectively manifests a block estimator for the unobserved ground truth. To solve the associated optimization problem, we rely on the block-coordinate minimization algorithm (\Cref{alg:BCM_relaxed_SBM}) with a conditional gradient solver based on the Frank-Wolfe algorithm \cite{vincent-cuaz_semirelaxed_2022,jaggi_revisiting_2013}: see Remark~\ref{rk:algo}. Our implementation is based on the \texttt{pot} library \cite{flamary_pot_2021a}.
Due to the non-convex nature of the objective, the initialization of the probabilistic labeling $\pi$ plays a crucial role in the quality of the solution. We explored various initialization strategies, and found a clustering-based initialization to be the most effective for graphons, while for SBMs a product based is typically more accurate both in estimating connection probabilities and number of blocks; see discussion below.

We performed 20 independent (Monte Carlo) trials for each configuration (ground-truth structure and sample size), and report the MSE achieved by the recovered estimate for each trial in \Cref{fig:sims_graphons,fig:sims_sbms}.

\begin{figure}[h!]
        \centering
        \includegraphics[width=.95\linewidth,trim={1.3cm 0cm 0cm 0cm},clip]{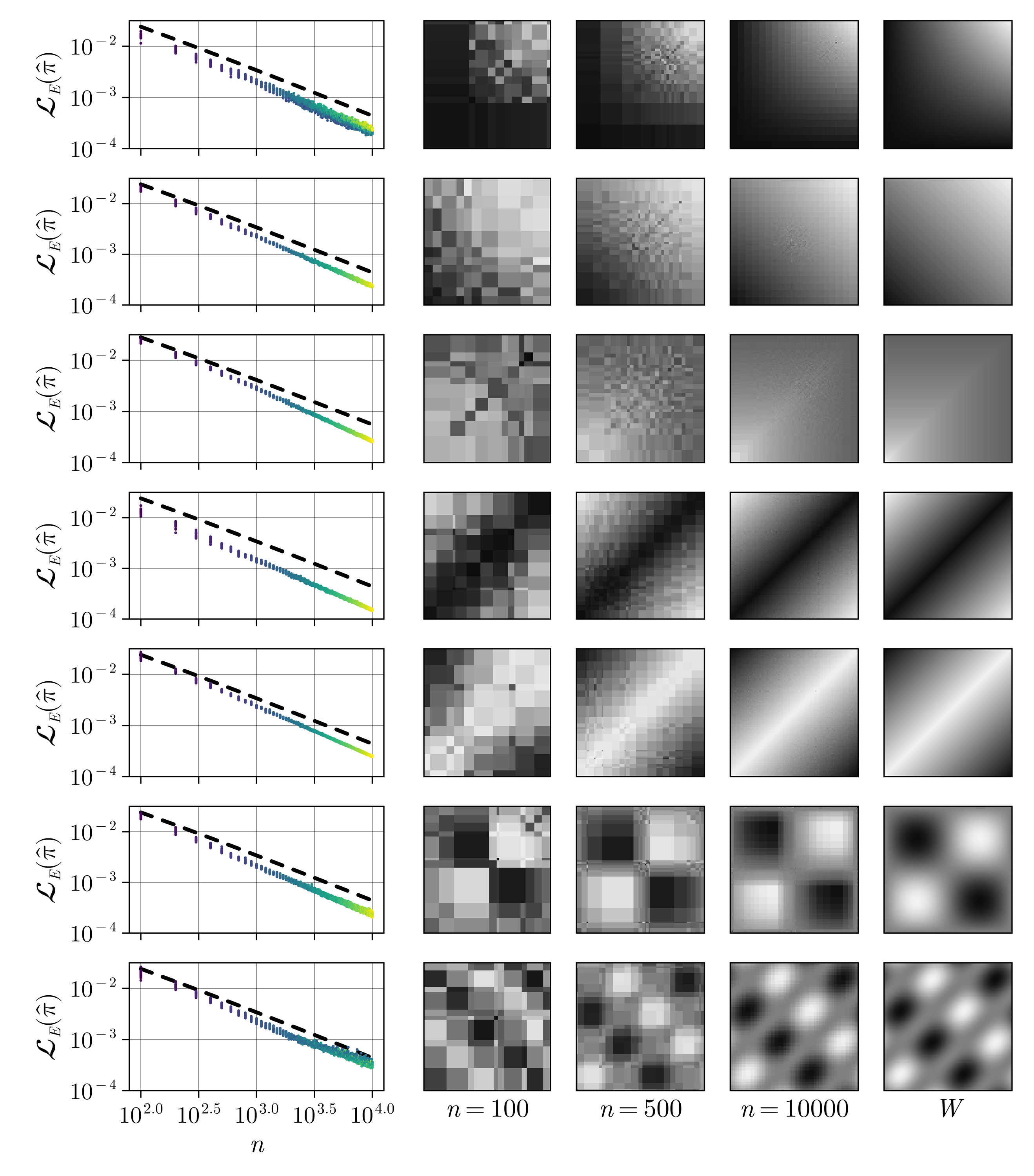}
    \caption{Left: Empirical performance of the relaxed least-squares estimator across different graphon ground-truth structures and sample sizes in 20 independent experiments. We evaluate the Gromov-Wasserstein distance of our estimator to the ground-truth for varying sample sizes $n$. Each subplot represents a different ground-truth structure, with theoretical rates indicated by bold lines. Right: instance of estimated block structure (aligned to the latent truth for ease of visualization) obtained from observed networks of size $n=100,500, 10000$, compared to the ground truth. The ordering reflects that of \Cref{table:ourexp}.}
    \label{fig:sims_graphons}
\end{figure}

\begin{figure}[h!]
        \centering
        \includegraphics[width=.95\linewidth,trim={1.4cm 0cm 0cm 0cm},clip]{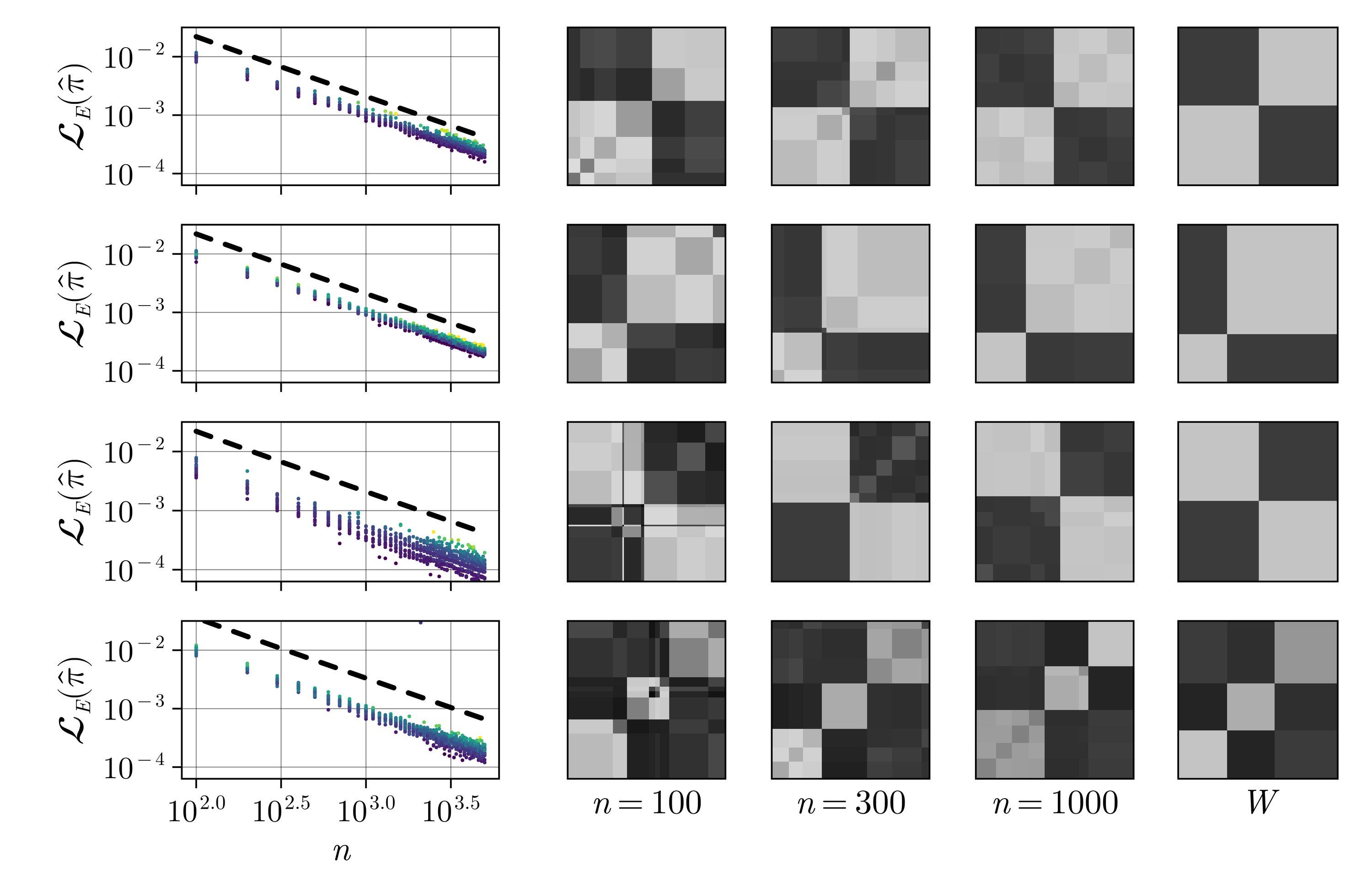}
    \caption{Left: Empirical performance of the relaxed least-squares estimator across different SBM ground-truth structures and sample sizes in 20 independent experiments. We evaluate the Gromov-Wasserstein distance of our estimator to the ground-truth for varying sample sizes $n$. Each subplot represents a different ground-truth structure, with theoretical rates indicated by bold lines. Right: instance of estimated block structure (aligned to the latent truth for ease of visualization) obtained from observed networks of size $n=100,500, 10000$, compared to the ground truth.  The ordering reflects that of \Cref{table:ourexp}.}
    \label{fig:sims_sbms}
\end{figure}

While we set the maximum number of blocks $k$ in our estimate to be controlled by the sample size, e.g., $k \leq \lceil \sqrt{n} \rceil$, the algorithm adaptively determines the effective number of blocks, often selecting fewer than the maximum allowed, as fewer blocks are often optimal for accurate approximation. This is however initialization sensitive: see \Cref{fig:continous_all,fig:sbm_true_k,fig:sbm_sparsity}.

\paragraph{On the choice of initialization.}
    In our experiments, we evaluated the impact of different initialization strategies on the performance of the relaxed least-squares estimator. Specifically, we considered a k-means-based, a spectral-clustering based one, and a product based one.
    The k-means initialization is based on clustering the observed nodes from the observed adjacency matrix to obtain an initial hard assignment of nodes to blocks. This hard assignment is then converted into a probabilistic labeling by setting $\pi_{i,a} = 1/n$ if node $i$ is assigned to block $a$, and $\pi_{i,b} = 0$ for all other blocks. Similarly, the spectral-based initialization clusters the nodes with the usual spectral clustering.
    Differently, the product-based initialization initializes the right-marginal -- encoding the distribution of weights across blocks of the barycenter—rather than the probabilistic labeling itself, which is then initialized as the product measure on the two marginals.
    Our experiments, as illustrated by \Cref{fig:sbm_true_k,fig:sbm_sparsity},  indicate that, for Graphons, clustering-based initialization is significantly less prone to local minimum entrapment compared to the random and product-based strategies. This is evident from the consistently lower mean squared error (MSE) achieved across all ground-truth structures and sample sizes. For graphons, k-means initialization, in particular, effectively starts the optimization process favorably, and achieves the smallest loss.
    However, we observed that the solutions obtained using k-means initialization tend to be less sparse compared to those derived from random or product-based initializations, as appreciated by the colorscale in \Cref{fig:sbm_true_k,fig:sbm_sparsity} displaying the selected number of blocks by the algorithm. This suggests that while k-means initialization mitigates the risk of local minimum, it may favor solutions with higher block number, potentially overestimating the complexity of the underlying structure. In contrast, random and product initializations, despite being more susceptible to local minimum, occasionally yield sparser solutions that align more closely with the true block structure. Yet, when the data is generated from a block model, product-based initialization typically grants the fastest convergence.
\begin{figure}[h!] 
    \centering
\includegraphics[width=1\linewidth]{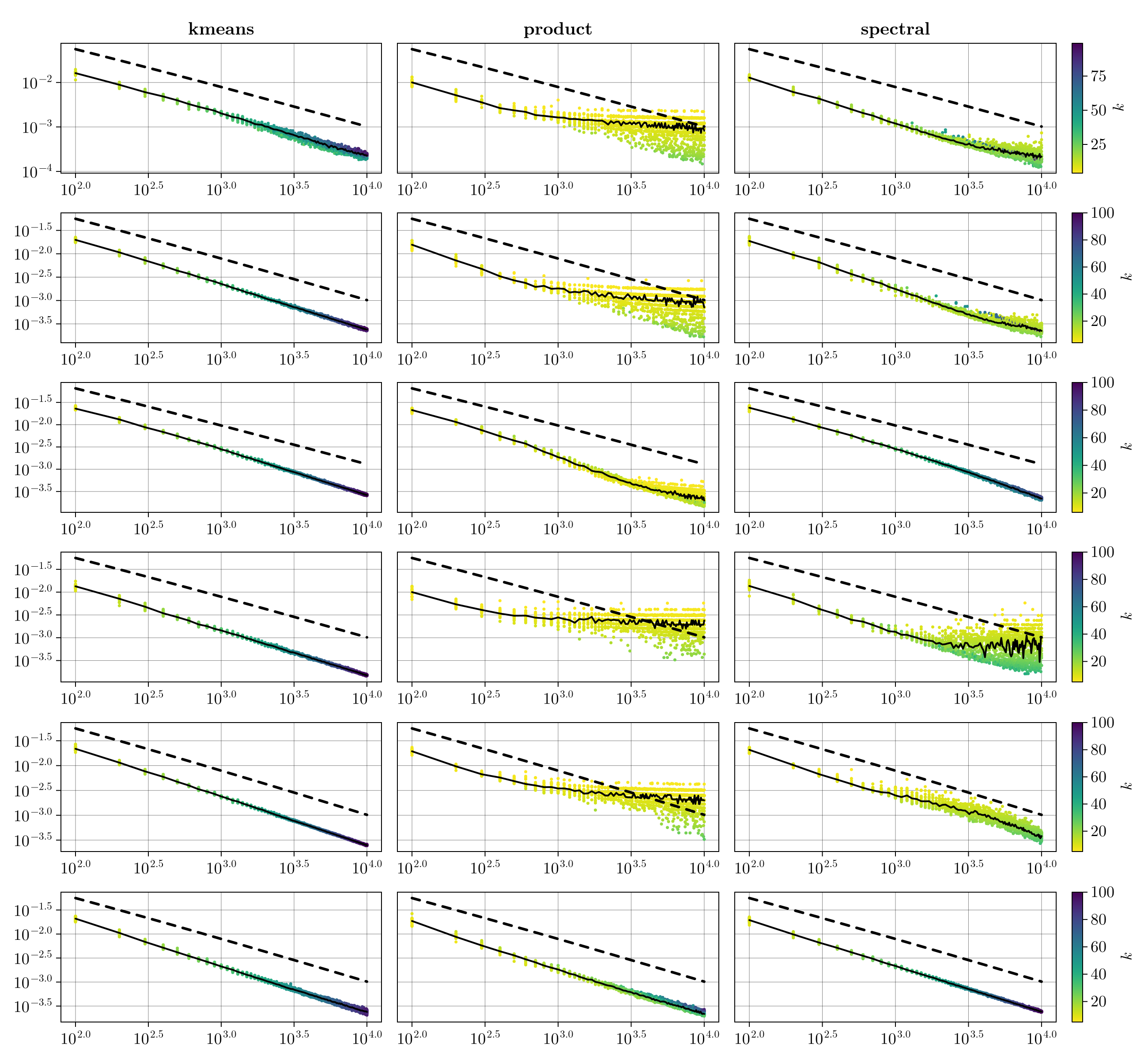}
    \caption{Empirical rates of estimation (GW distance) for graphons for increasing network size. Colorscale shows selected size of best approximating block model, with lighter colors indicating fewer blocks. Full lines represent the median loss and dashed lines represent theoretical rates. Column corresponds to initialization strategy (k-means, product, spectral). The ordering reflects that of \Cref{table:ourexp}.}
    \label{fig:continous_all}

\end{figure}
\begin{figure}[h!]
    \centering
        \includegraphics[width=\linewidth]{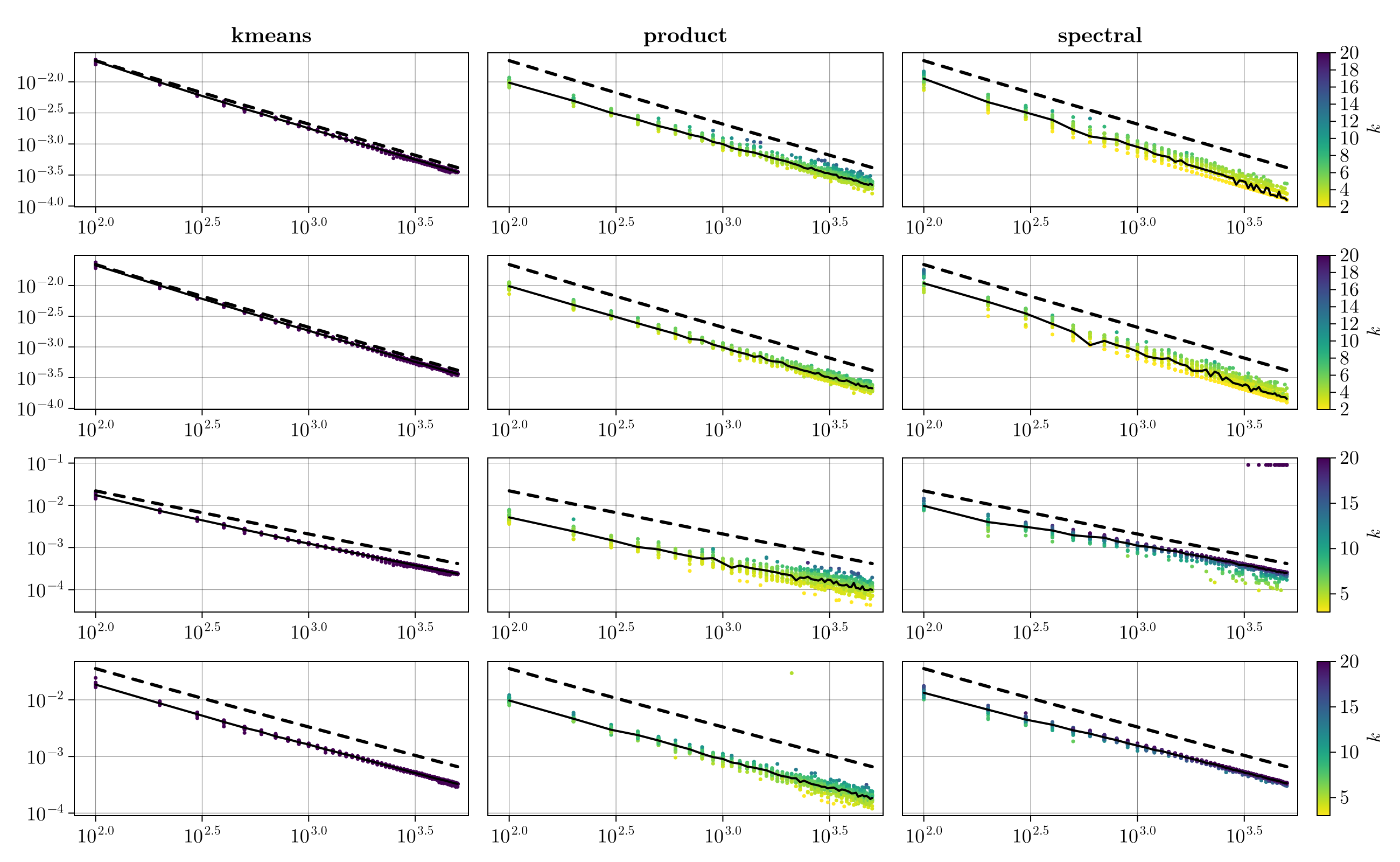}
    \caption{Empirical rates of estimation (GW distance) for SBMs for increasing network size and adaptive block numbers. Colorscale shows selected size of best approximating block model, with lighter colors indicating fewer blocks. Full lines represent median loss and dashed lines theoretical rate.  Column corresponds to initialization strategy (k-means, product, spectral). The ordering reflects that of \Cref{table:ourexp}.}
    \label{fig:sbm_sparsity}
\end{figure}

\begin{figure}[h!]
    \centering
        \includegraphics[width=\linewidth]{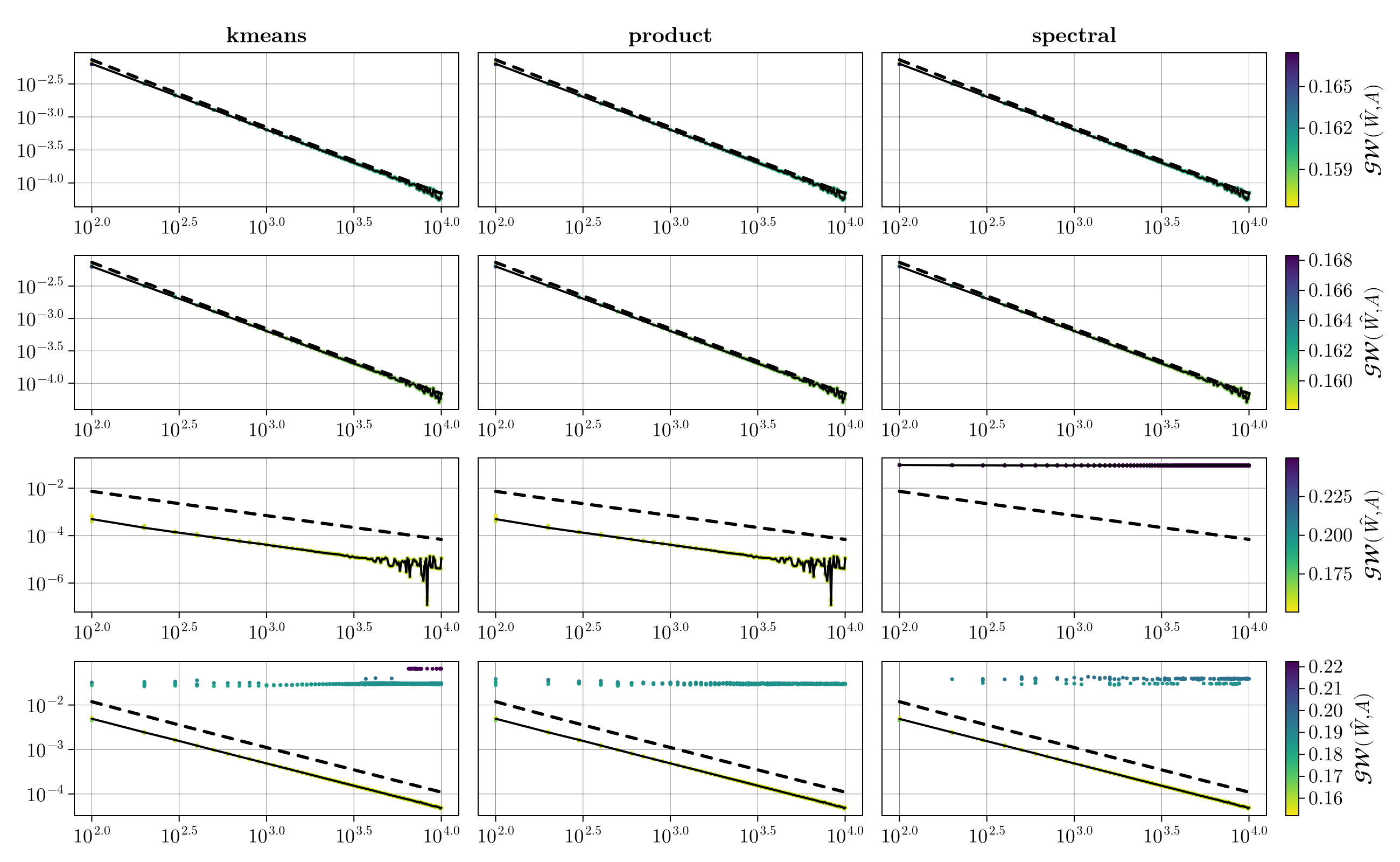}
    \caption{Empirical rates of estimation (GW distance) for SBMs for increasing network size and given (oracle) number of blocks. Full lines represent the median loss and dashed lines represent theoretical rates. The ordering reflects that of \Cref{table:ourexp}.}
    \label{fig:sbm_true_k}
\end{figure}

\clearpage
\paragraph{Algorithmic implementation, computational speed and comparison with existing methods.} \label{rk:algo}
    In the (SRGWB) step of \Cref{alg:BCM_relaxed_SBM}, we employ a revisited Frank-Wolfe method \cite{jaggi_revisiting_2013} to solve the constrained non-convex quadratic optimization problem over the probabilistic labeling \(\pi\). Similarly to \cite{vincent-cuaz_semirelaxed_2022}, we employ a conditional gradient solver which is known to converge to local
    stationary point on non-convex problems \cite{lacoste-julien_convergence_2016}. Since we consider a semi-relaxed problem -- with barycenter marginal estimated in the process, rather than constrained -- the cost of this operation is of order $O(n k)$, with possible parallelization on GPUs. Our implementation relies on the semi-relaxed Gromov-Wasserstein solver made available by the \texttt{POT} library (Python Optimal Transport \cite{flamary_pot_2021a}), and runs entirely on GPUs. We terminate the outer loop when the relative change in $\cL_{\Edges}$ drops below $10^{-6}$; fewer than $50$ outer iterations suffice for $n$ up to $10^4$ (runtime in \Cref{fig:timing}).

    \begin{figure}[h!]
        \centering
        \includegraphics[width=0.7\linewidth]{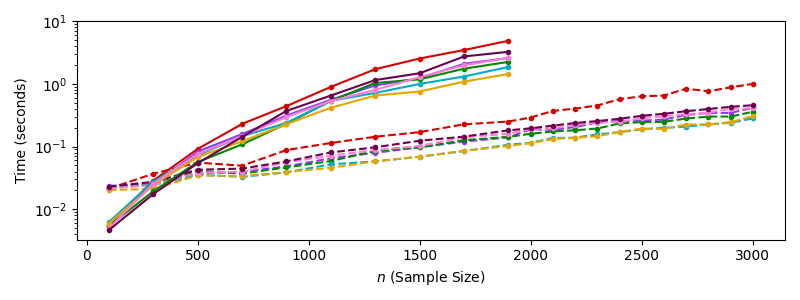}
        \caption{Runtime of our method with k-means initialization for graphons in \Cref{table:ourexp}. 
        Lines represent the time on  CPU (Mac M1) , the dashed lines GPU (Nvidia L40s GPU with a single node).}
        \label{fig:timing}
    \end{figure}
\Cref{fig:benchmark} shows the comparison of our method to other approaches fitting piecewise constant graphons: the histogram estimator with greedy estimation \cite{olhede_network_2014b}, the variational approximation of the Maximum Likelihood Estimator (MLE) \cite{celisse_consistency_2012a} (implemented in \texttt{sparseBM} \cite{frisch_sparsebm_2022}), and the Largest Gap (LG) method \cite{channarond_classification_2012a}. We also benchmark against methods that do not perform dimensionality reduction: the Sort and Smooth (SAS) method \cite{chan_consistent_2014a}, the Neighbourhood Smoothing (NBDsmooth) method \cite{zhang_estimating_2017a}, and the Universal Singular Value Thresholding (USVT) method \cite{chatterjee_matrix_2015a}. We highlight that while all the methods above apply to simple binary graphs, they do not directly extend to more complex network data, contrary to ours.

The srGW-based methods achieve a lower Gromov-Wasserstein distance to the truth while maintaining superior scalability and faster runtimes compared to the HistogramEstimator and \texttt{sparseBM}. These existing block-model methods exhibit significantly higher computational costs as the sample size $n$ increases, often encountering convergence issues (often associated with log-likelihood maximization for binary variable).
While USVT \cite{chatterjee_matrix_2015a} remains competitive in specific regimes (e.g., when the graphon is rank $1$ as is $G1$), it fails to accurately recover dissociative structures, such as those found in SBM 2. Furthermore, unlike our barycentric approach, USVT yields an $n \times n$ estimator that lacks the low-dimensional interpretability required to simplify and compress the underlying generative mechanism.
    \begin{figure}[h!]
        \centering
        \includegraphics[width=\linewidth]{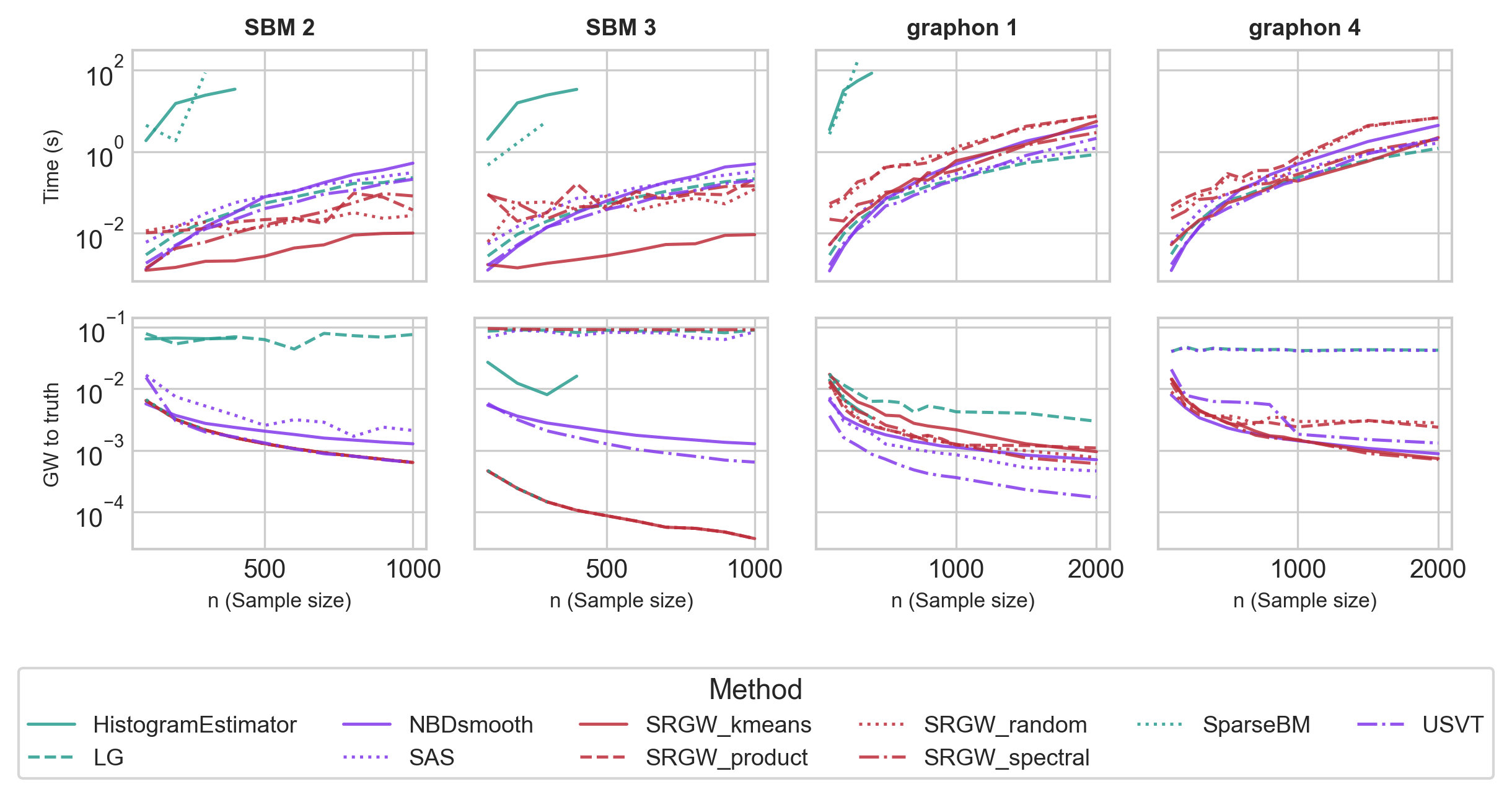}
        \caption{Average running time (seconds, CPU) for different methods. For Histogram and the \texttt{sparseBM}, graph sizes only go up to 300 as sparseBM  would run into numerical instability and convergence issues for higher number of nodes and the computation times were too important. Red indicates our method with different initialization strategies (k-means, spectral, product), blue indicates other low-dimensional methods, and purple indicates methods that do not perform dimensionality reduction.}
        \label{fig:benchmark}
    \end{figure}

\begin{figure}
    \centering
    \includegraphics[width=\linewidth]{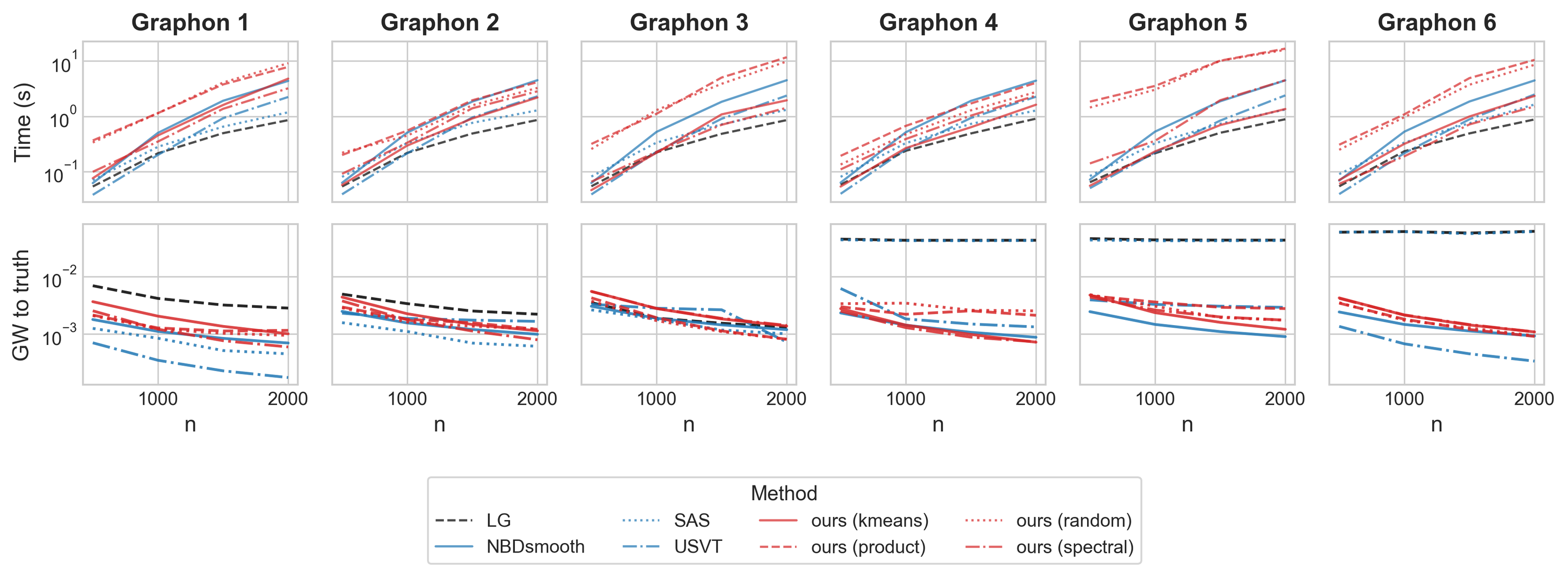}
    \includegraphics[width=0.666\linewidth]{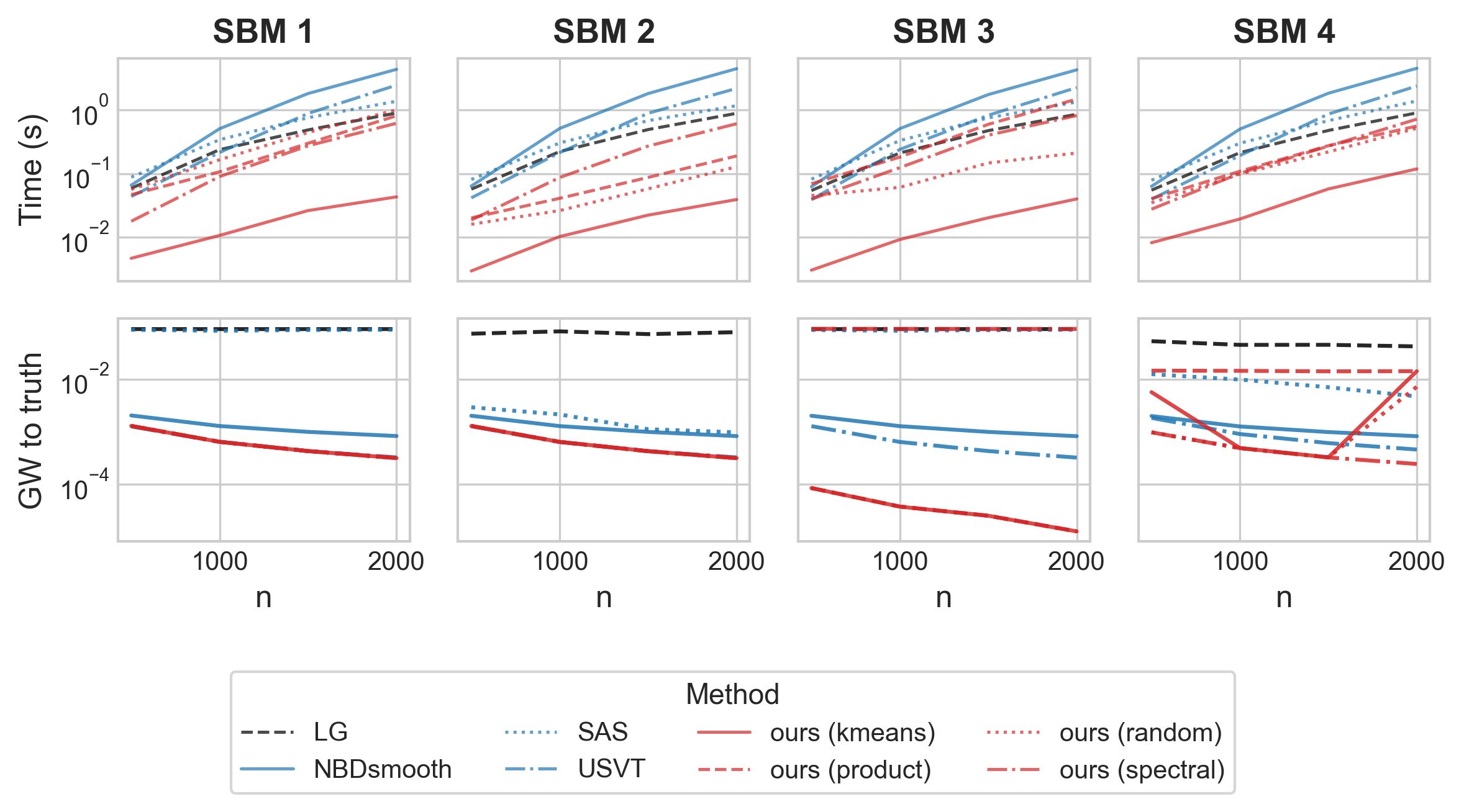}
    \caption{Complete results for all graphons (top) and SBMs (bottom) for all methods.}
\end{figure}
\clearpage

\paragraph{On the relation to structured Gromov--Wasserstein barycenter methods.}
\label{rk:xu_comparison}
A natural question is how our estimator relates to GW-barycenter methods for graphons,
in particular the structured GW barycenter (SGWB) framework of \citet{xu_learning_2021b}.
Although SGWB and our estimator are both based on GW-type objectives, the aims are different: SGWB is designed for graph averaging through regularised barycenters, whereas our objective is recovery of the latent graphon mechanism with statistical guarantees.
We discuss this relation here and explain why a direct head-to-head comparison on the
recovery loss of \Cref{tab:benchmark} would not be informative.

\emph{Theoretical object vs.\ computed object.}
The theoretical analysis in \citet{xu_learning_2021b} relates the cut distance between
graphons to the GW distance between their step-function approximations, leveraging the
weak regularity lemma. The estimator that is actually computed, however, is a
regularised surrogate of this object: the public implementation
solves the inner OT step with a proximal-point Sinkhorn iteration of the form
$\mathrm{kernel}\!\propto\!\exp(-\mathrm{cost}/\beta)$, and adds a quadratic spectral
smoothness penalty $\alpha\,D^\top D$ on the barycenter, with $D$ a discrete
second-difference operator.\footnote{See \texttt{methods/learner.py} of
\url{https://github.com/HongtengXu/SGWB-Graphon}, in particular
\texttt{smoothed\_fgw\_barycenter} and \texttt{proximal\_ot}.}
Neither of these two regularisers is part of the cut-distance argument, and the
diffuseness of the resulting transport plan is governed by the entropic parameter
$\beta$ rather than by the GW objective itself. As a consequence, the rates of
convergence established for the unregularised barycenter do not transfer verbatim to
the implemented estimator, and our \Cref{prop:optimality_gap} does not apply to
$\beta$-regularised solutions, whose support spreads with $\beta$.

\emph{Hyperparameters and absence of an oracle-rate barycenter size.}
The implemented method requires the user to specify $(\alpha,\beta,\gamma)$
controlling the smoothness, entropic and Wasserstein regularisation strengths,
together with two solver budgets and a Sinkhorn tolerance.
We are not aware of a data-driven rule to pick $(\alpha,\beta)$ in our setting, where
ground-truth assignments are unavailable. Furthermore, the barycenter size in the
public implementation is set by a logarithmic heuristic of the largest input graph
($\approx \log_2 n$), which is independent of the smoothness exponent $\alpha$ of the
unknown graphon and therefore does not realise the rate-optimal scaling
$k\asymp n^{1/(\alpha\wedge 1+1)}$ used in \Cref{thm:estimation_1}.

\subsection{Real data: EEG}\label{subsec:realdata_eeg}

\begin{figure}[h!]
    \centering
    \begin{minipage}[c]{0.37\linewidth}
        \centering
        \includegraphics[width=\linewidth]{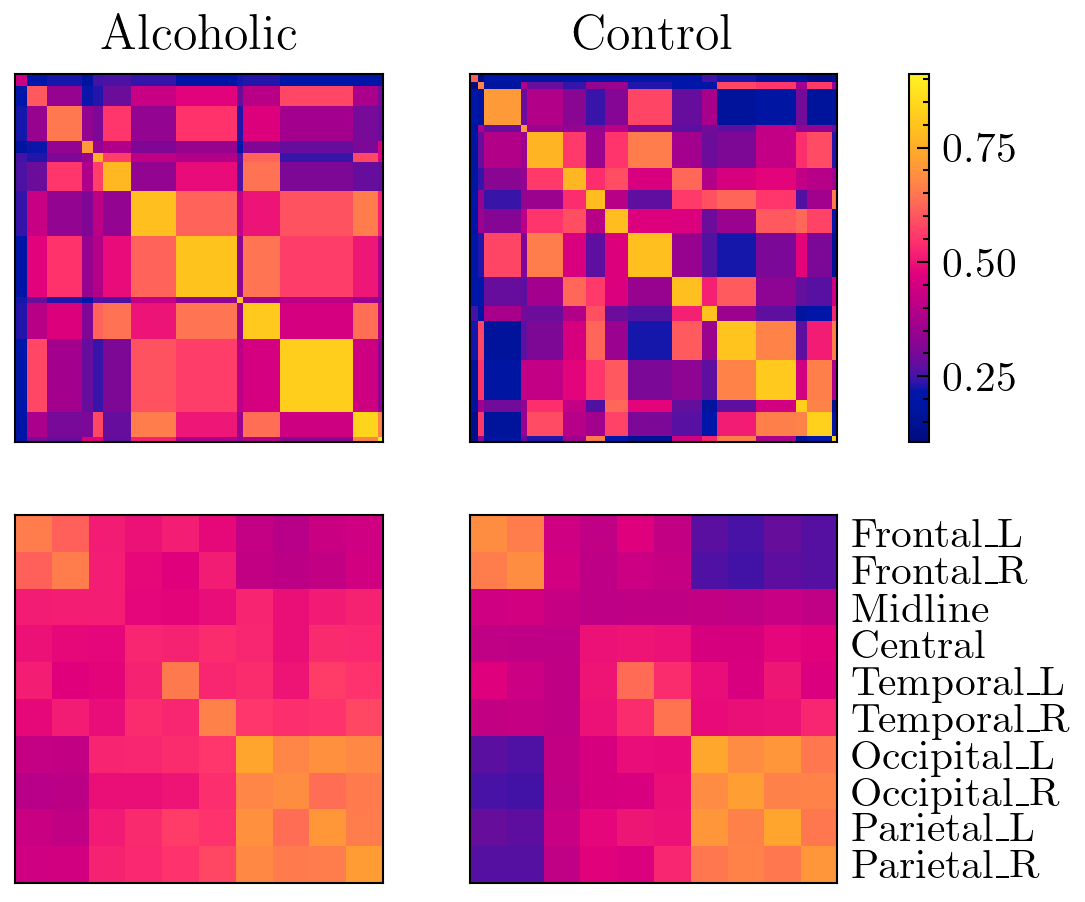}
    \end{minipage}\hspace{0.02\linewidth}
    \begin{minipage}[c]{0.37\linewidth}
        \centering
        \includegraphics[width=\linewidth]{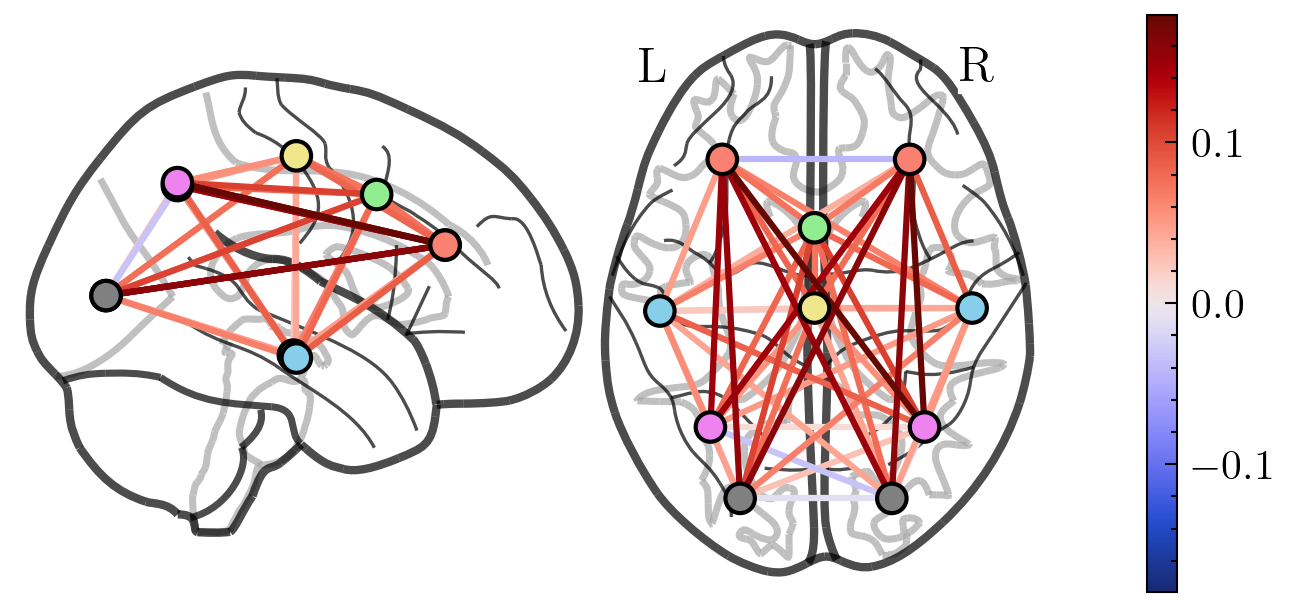}

        \includegraphics[width=\linewidth]{figures/real_data/eeg_groups/eeg_40_task_s1_obj_kernel_linear.png}
    \end{minipage}

    \caption{Low-dimensional co-activation structures estimated by the srGW barycenter for subjects exposed to a single stimulus.}
    \label{fig:layout_s1}
\end{figure}

\begin{figure}[h!]
    \centering
    \begin{minipage}[c]{0.37\linewidth}
        \centering
        \includegraphics[width=\linewidth]{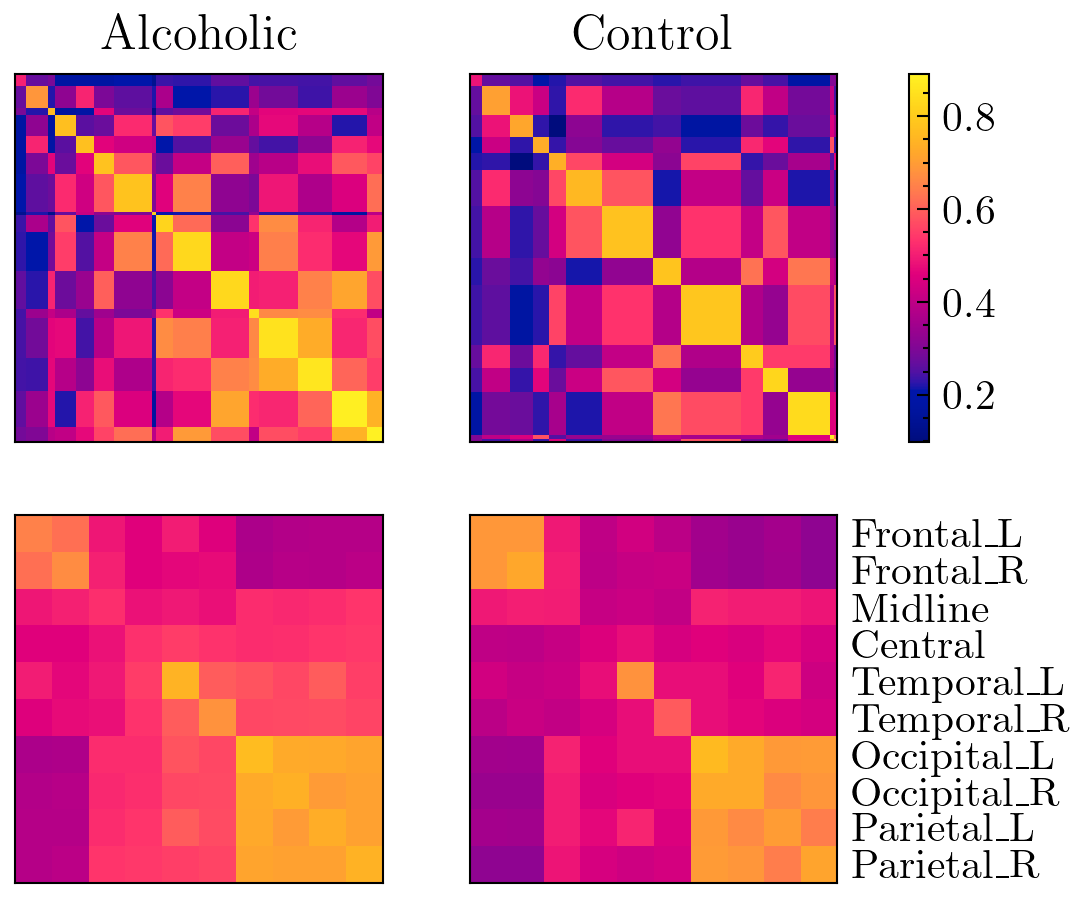}
    \end{minipage}\hspace{0.02\linewidth}
    \begin{minipage}[c]{0.37\linewidth}
        \centering
        \includegraphics[width=\linewidth]{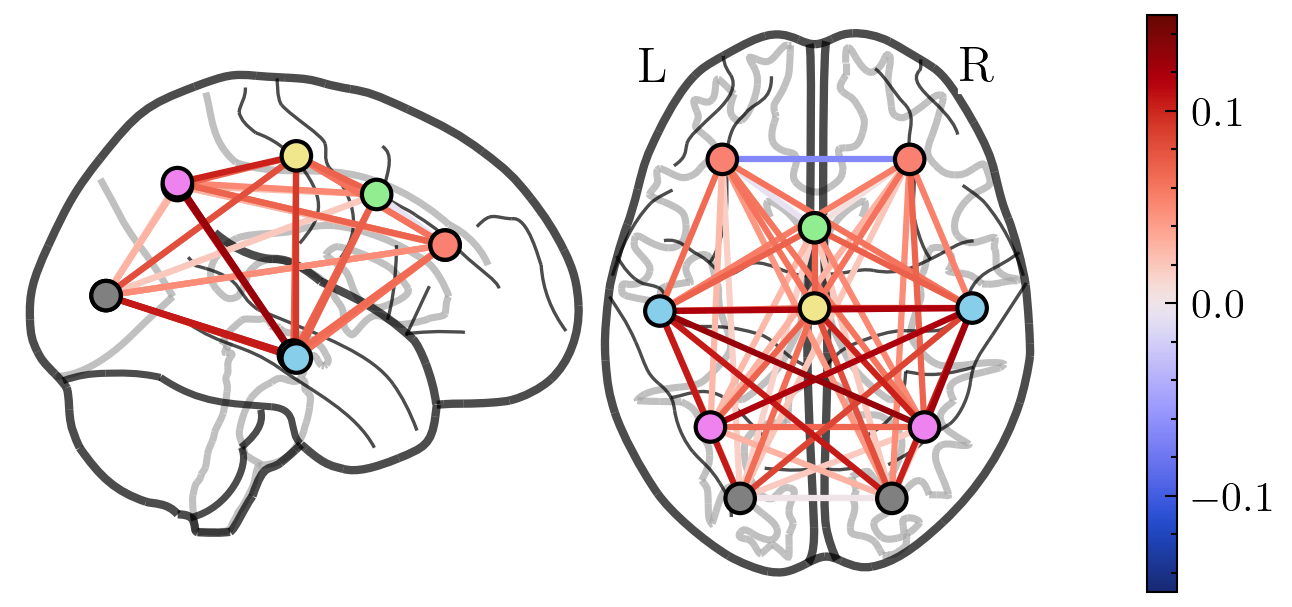}

        \includegraphics[width=\linewidth]{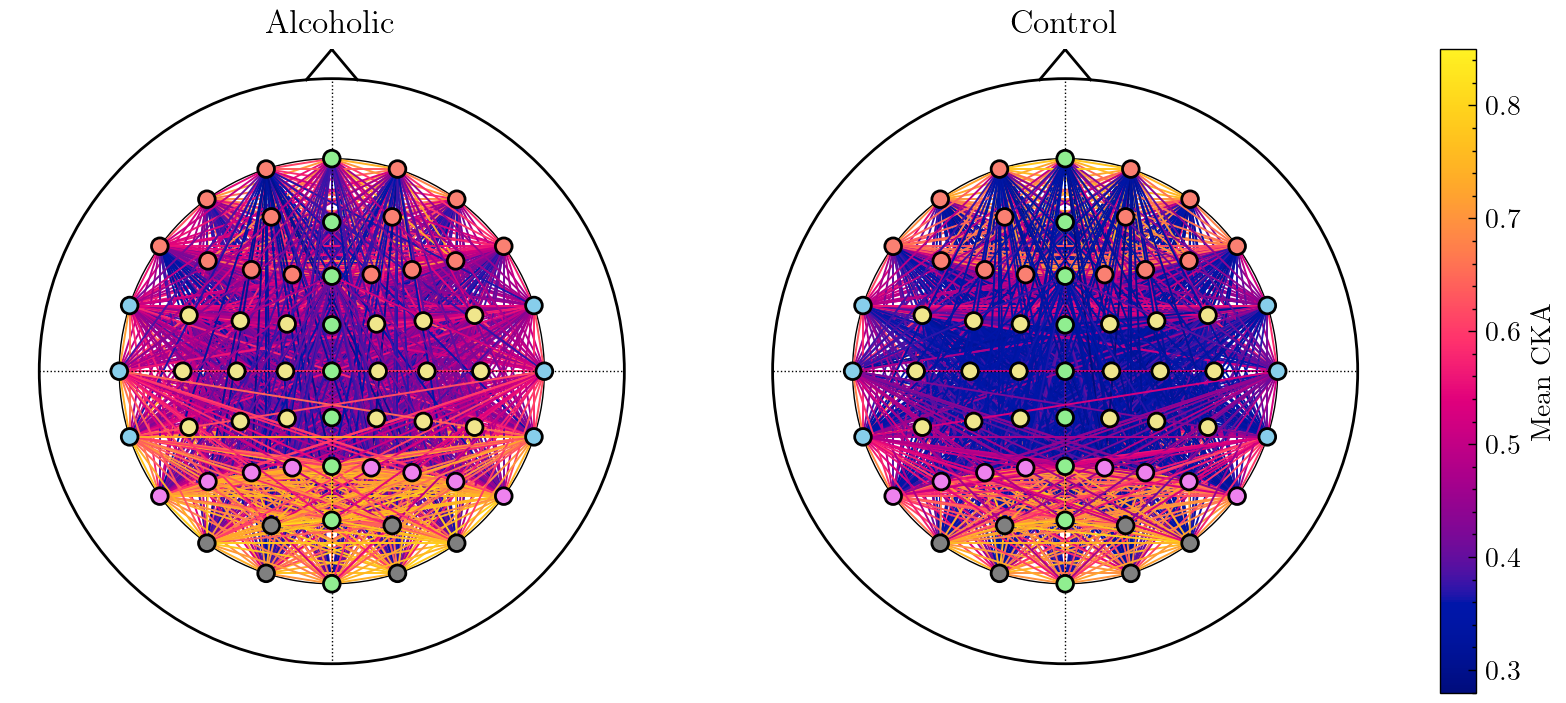}
    \end{minipage}

    \caption{Low-dimensional co-activation structures estimated by the srGW barycenter for subjects exposed to two matching stimuli.}
    \label{fig:layout_s2_match}
\end{figure}

\begin{figure}[h!]
    \centering
    \begin{minipage}[c]{0.37\linewidth}
        \centering
        \includegraphics[width=\linewidth]{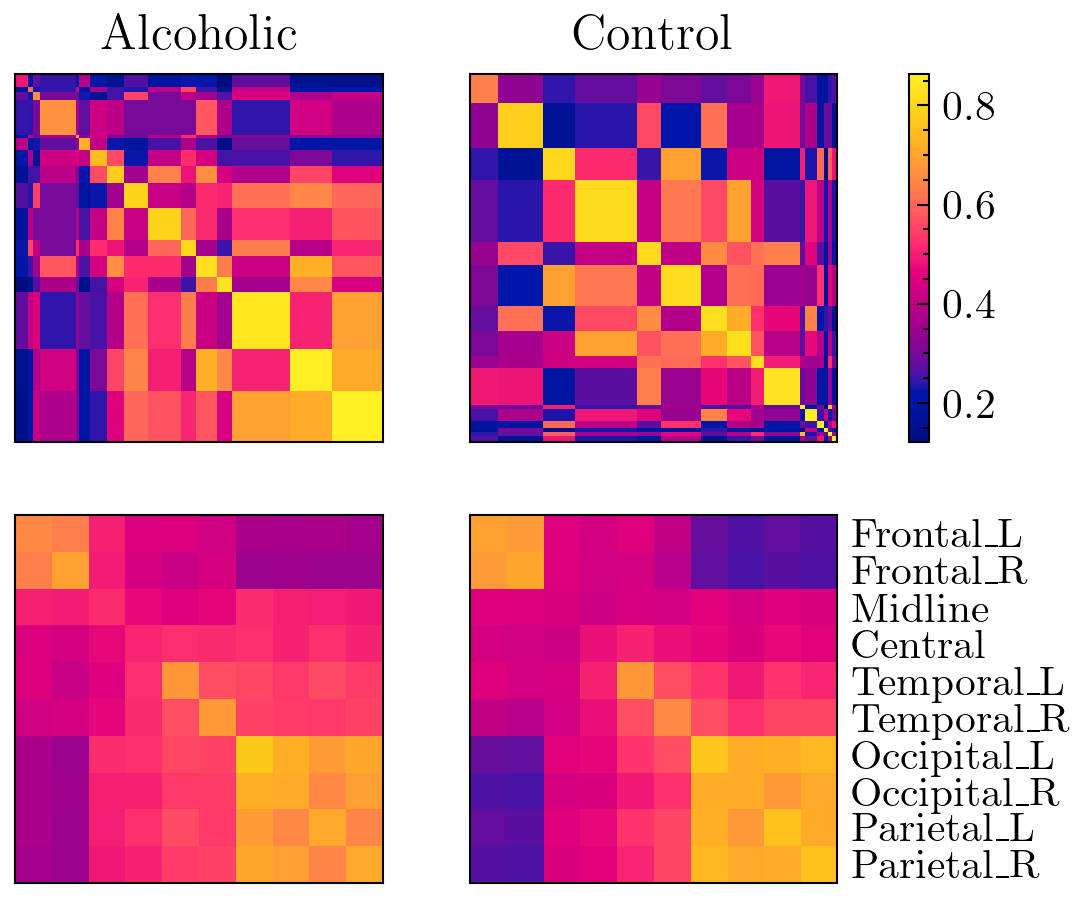}
    \end{minipage}\hspace{0.02\linewidth}
    \begin{minipage}[c]{0.37\linewidth}
        \centering
        \includegraphics[width=\linewidth]{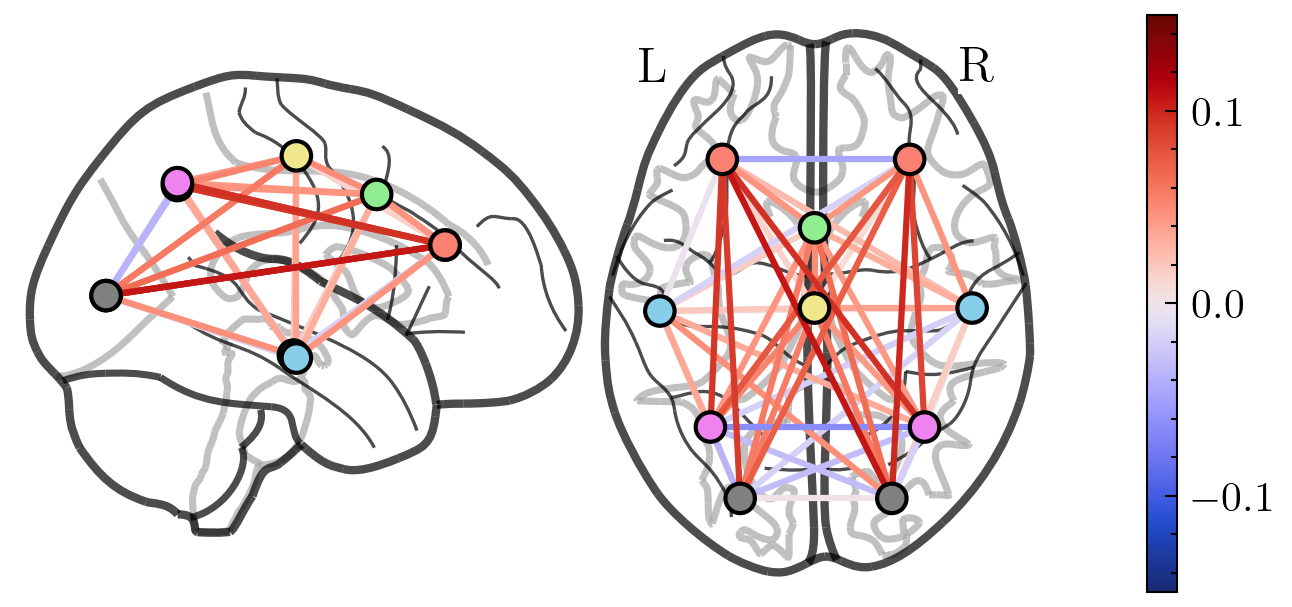}

        \includegraphics[width=\linewidth]{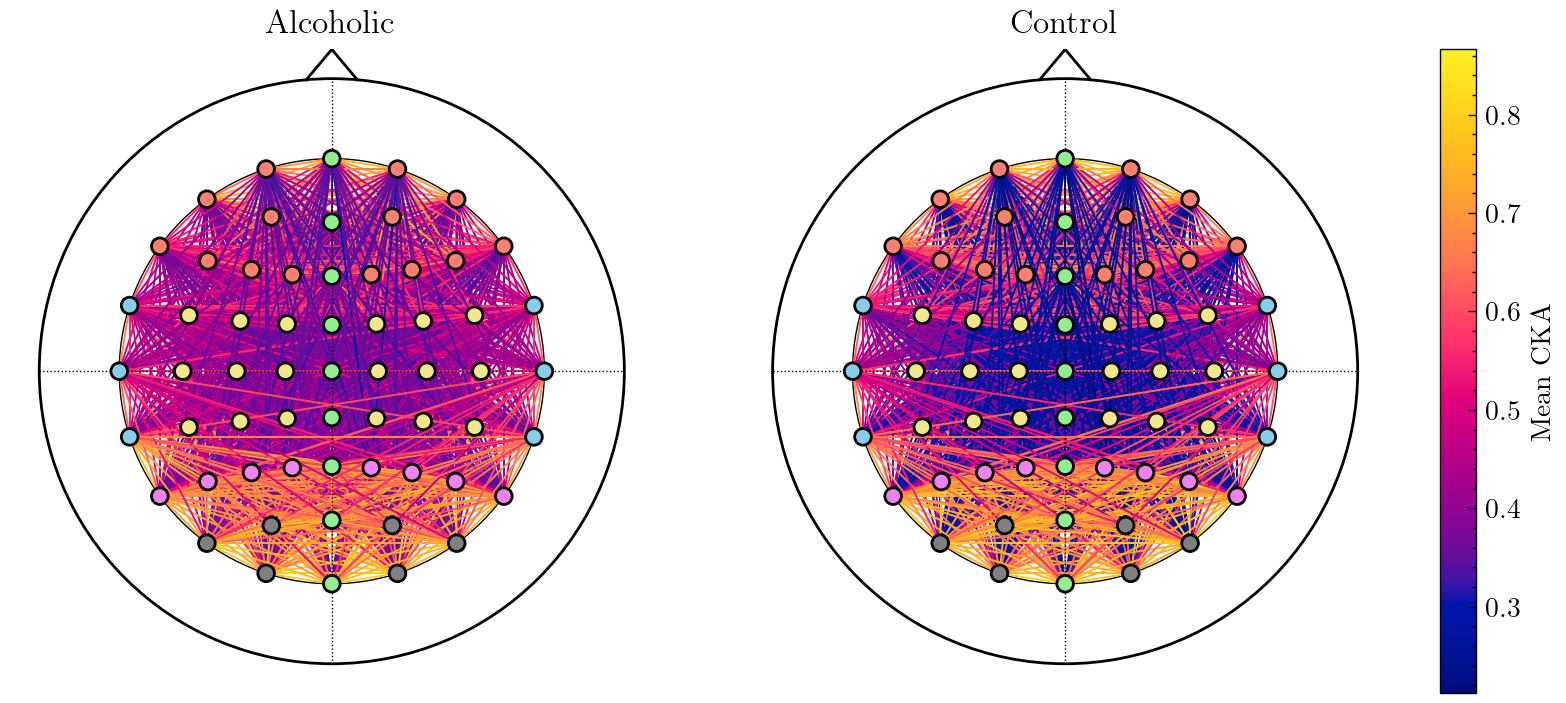}
    \end{minipage}

    \caption{Low-dimensional co-activation structures estimated by the srGW barycenter for subjects exposed to two non-matching stimuli.}
    \label{fig:layout_s2_nomatch}
\end{figure}

\clearpage
\subsection{Real data: OpenFlight Airport Database}\label{subsec:realdata_airport}

\begin{figure}[h!]
    \centering
    \rotatebox{90}{%
    \includegraphics[
            width=1.35\textwidth,
        ]{figures/real_data/planes/airport_network_script.png}
    }
    \caption{Airports location colored by inferred cluster membership, with node size proportional to cluster connectivity. Our method uses $23$ clusters out of a maximum of $57$. The biggest clusters in terms of number of airports correspond to mostly local airports, and account for close to half of the total number of airports (dark small nodes in the map).}
    \label{fig:airports_world_map_large}
\end{figure}

\end{document}